\DeclareMathOperator*{\argmin}{arg\,min}
\newtheorem{theorem}{Theorem}
\newcommand{\Cov}{\mathrm{Cov}}
\newcommand{\CalI}{\mathcal{I}}
\newcommand{\CalJ}{\mathcal{J}}
\newcommand{\expectation}[1]{\left< #1 \right>}
\newcommand{\bigo}[1]{\mathcal{O}\left( #1 \right)}
\newcommand{\bfS}{\mathbf{\Sigma}}
\newcommand{\bfth}{\pmb{\theta}}
\newcommand{\bff}{\bm{f}}
\newcommand{\bfg}{\bm{g}}
\newcommand{\bfw}{\mathbf{w}}
\newcommand{\bfx}{\mathbf{x}}
\newcommand{\bfX}{\mathbf{X}}
\newcommand{\bfy}{\mathbf{y}}
\newcommand{\bfp}{\mathbf{p}}
\newcommand{\bfP}{\mathbf{P}}
\newcommand{\bfq}{\mathbf{q}}
\algnewcommand{\Initialize}[1]{%
  \State \textbf{Initialize:} 
  \Statex \hspace*{\algorithmicindent}\parbox[t]{0.88\linewidth}{\raggedright #1}
}
\begin{document}

\title{An information-matching approach to optimal experimental design and active learning}
\author{Yonatan Kurniawan}
\affiliation{Brigham Young University, Provo, UT 84602, USA}
\author{Tracianne B.~Neilsen}
\affiliation{Brigham Young University, Provo, UT 84602, USA}
\author{Benjamin L.~Francis}
\affiliation{Achilles Heel Technologies, Orem, UT 84097, USA}
\author{Alex M.~Stankovic}
\affiliation{SLAC National Accelerator Laboratory, Menlo Park, CA, USA}
\author{Mingjian Wen}
\affiliation{University of Electronic Science and Technology of China, Chengdu, 611731, China}
\author{Ilia Nikiforov}
\author{Ellad B.~Tadmor}
\affiliation{University of Minnesota, Minneapolis, MN 55455, USA}
\author{Vasily V.~Bulatov}
\author{Vincenzo Lordi}
\affiliation{Lawrence Livermore National Laboratory}
\author{Mark K.~Transtrum}
\email{mktranstrum@byu.edu}
\affiliation{Brigham Young University, Provo, UT 84602, USA}
\affiliation{Achilles Heel Technologies, Orem, UT 84097, USA}
\affiliation{SLAC National Accelerator Laboratory, Menlo Park, CA, USA}

\begin{abstract}
    The efficacy of mathematical models heavily depends on the quality of the training data, yet collecting sufficient data is often expensive and challenging.
    Many modeling applications require inferring parameters only as a means to predict other quantities of interest (QoI).
    Because models often contain many unidentifiable (sloppy) parameters, QoIs often depend on a relatively small number of parameter combinations.
    Therefore, we introduce an information-matching criterion based on the Fisher Information Matrix to select the most informative training data from a candidate pool.
    This method ensures that the selected data contain sufficient information to learn only those parameters that are needed to constrain downstream QoIs.
    It is formulated as a convex optimization problem, making it scalable to large models and datasets.
    We demonstrate the effectiveness of this approach across various modeling problems in diverse scientific fields, including power systems and underwater acoustics.
    Finally, we use information-matching as a query function within an Active Learning loop for material science applications.
    In all these applications, we find that a relatively small set of optimal training data can provide the necessary information for achieving precise predictions.
    These results are encouraging for diverse future applications, particularly active learning in large machine learning models.
\end{abstract}

\maketitle

A model's predictive performance depends strongly on the quality and quantity of data available for training.
Curating comprehensive datasets, however, often confronts practical constraints, including instrumentation, available resources, and cost.
Optimal experimental design (OED) \cite{leardi_experimental_2009} and active learning (AL) \cite{alizadeh_survey_2021} emerge as practical data collection strategies.
Intentionally designing maximally informative experiments guarantees that data are most informative relative to the underlying phenomena of interest, minimize costs, and meet operational requirements.
These methodologies have broad applications across scientific domains, including sensor placement problems in power systems \cite{yuill_optimal_2011,peng_optimal_2006,aminifar_contingency-constrained_2010_edited,milosevic_nondominated_2003,chakrabarti_optimal_2008_edited,soudi_optimal_1999}
and underwater acoustics \cite{wood_optimisation_2003,dosso_optimal_1999,barlee_array_2002,barlee_array_2002,dosso_array_2006,tidwell_designing_2019},
the development of accurate interatomic potentials in materials science \cite{csanyi_learn_2004_edited,artrith_high-dimensional_2012_edited,podryabinkin_active_2017,gubaev_accelerating_2019},
and many other scientific fields \cite{casey2007optimal,Jeong_Zhuang_Transtrum_Zhou_Qiu_2018,wang_active-learning_2021,kim_uncertainty_2022_edited}.

Many OED criteria utilize the Fisher information matrix (FIM), whose inverse establishes a lower bound on parameter covariance, known as the Cram\'{e}r-Rao bound \cite{cramer1999mathematical,streiner_precision_2006,van2007parameter}.
Common approaches optimize parameter precision through the FIM, for example by minimizing its trace (A-optimality) \cite{jacroux_-optimality_1989,butler_approximate_2008,jones_-optimal_2021}, maximizing its determinant (D-optimality) \cite{andere-rendon_design_1997,balsa-canto_computing_2008,podryabinkin_active_2017,jones_-optimal_2021}, or maximizing the smallest eigenvalues (E-optimality) \cite{chow_e-optimality_1997,balsa-canto_computing_2008,morgan_e-optimality_2011}.

However, many applications of predictive models do not require precise parameter estimates \emph{per se}, but accurate predictions for key quantities of interest (QoIs) \cite{casey2007optimal,transtrum2012optimal,wen_force-matching_2017_edited,shmueli_predictive_2011,tavazza_uncertainty_2021}.
This distinction is well illustrated by \emph{sloppy models}, where many parameter combinations are \emph{practically unidentifiable}, yet still yield precise predictions \cite{transtrum_geometry_2011,transtrum2015perspective,brouwer_underlying_2018,quinn_information_2022}.
Such models typically have only a few stiff, identifiable directions, while the rest are sloppy and poorly constrained.
Furthermore, the parameters that are identifiable from the training data may not align with those most relevant for predicting QoIs.
Precise predictions are still possible when these QoI-relevant directions fall within the identifiable subspace; however, if they do not, the resulting uncertainty may be large or even divergent, regardless of how well some parameters are constrained \cite{kurniawan_bayesian_2022}.

Furthermore, sloppy models exhibit a characteristic information spectrum, where FIM eigenvalues are nearly uniformly spaced on a log scale over many orders of magnitude.
Many eigenvalues are smaller than the model's evaluation precision, rendering the aforementioned OED criteria sensitive to numerical noise.


\begin{figure*}[ht]
    \centering
    \includegraphics[width=0.9\textwidth]{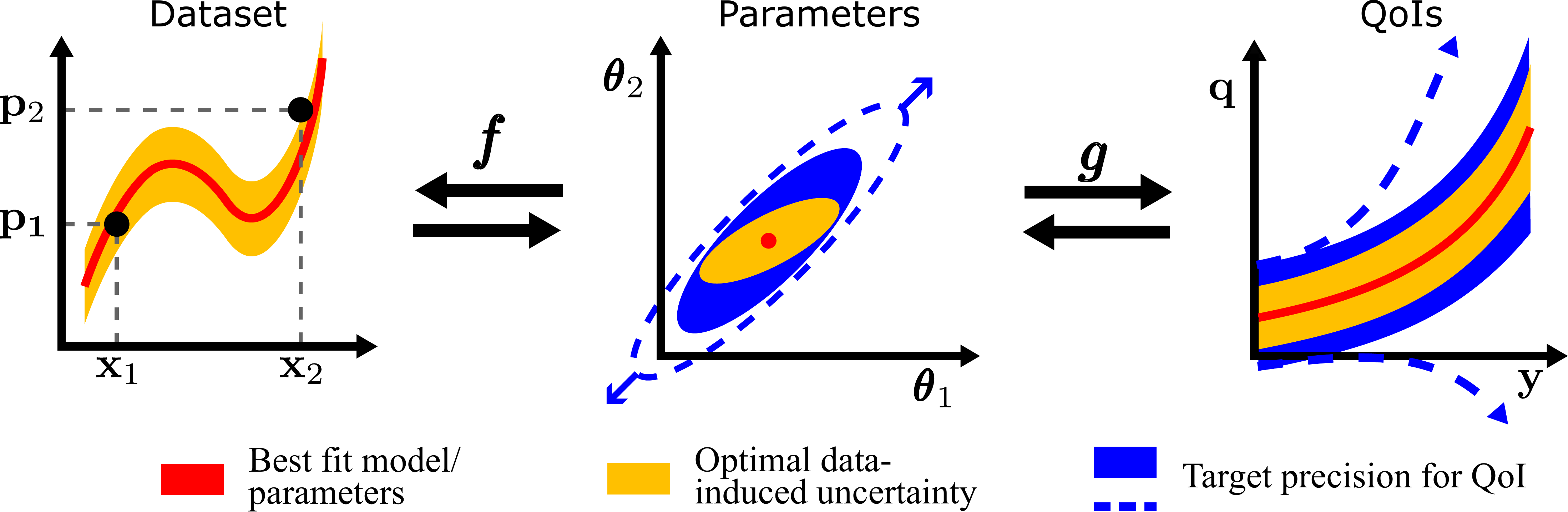}
    \caption[The information-matching workflow]{
	\textbf{Relationship between training data (left), model parameters (middle), and QoIs (right) in the information-matching framework.}
	One first selects the target precision for the QoIs (blue envelope in the right panel).
	This QoI precision induces a minimal confidence region in parameter space (blue ellipse in the middle panel).
	The information-matching criterion selects training data and target precision (orange envelope in the left panel) such that the resulting parameter uncertainty (orange ellipse in the middle panel) is more restrictive than that induced by the QoIs.
	Propagating the parameter uncertainty to the QoIs gives predictions that are at least as precise as the original target (orange envelope in the right panel).
	This relationship holds even if the target uncertainties were divergent for certain QoIs (dashed blue curves in the right panel), resulting in the target parameter confidence diverging for some parameter combinations (dashed blue ellipse in the middle panel, extending in some directions).
    }
    \label{fig:modeling}
\end{figure*}

Motivated by these considerations, we develop an information-matching method for OED that prioritizes the precision of predictions for the QoIs.
This approach uses a model parameterized by $\bfth$ in two key scenarios: training, $\bff$, and prediction, $\bfg$ (see Fig.~\ref{fig:modeling}).
Given a dataset of $M$ independent inputs $\{\bfx_m\}_{m=1}^M$ and their corresponding ground truth labels $\{\bfp_m\}_{m=1}^M$, we first use $\bff(\bfth; \bfx_m)$ to train the parameters against the data $\{\bfx_m, \bfp_m\}$.
Then, we use the trained parameters to predict the QoIs $\bfq$ corresponding to the input $\bfy$ through the mapping $\bfg(\bfth; \bfy)$.
In this scenario, $\bfy$ may act as an input, similar to $\bfx_m$, as a control parameter for the QoIs, or as a discrete index to distinguish between different QoI values.
The information-matching method leverages both scenarios to identify the minimal subset of training data that contains the information necessary to precisely constrain the parameters relevant to QoIs.
Our strategy is: given a target precision for QoIs, align the FIM for the training data with that of the QoIs, ensuring the training data carry the information needed to constrain the predictions precisely.
Thus, only the parameter combinations that need to be identified are trained, bypassing numerical stability issues in cases where the FIMs are ill-conditioned.

The FIM is defined as the expectation value of the Hessian of the log-likelihood over the probability of the labels.
For weighted least-squares, by far the most common regression scenario, the negative log-likelihood is (up to an additive constant)
\begin{equation}
    \label{eq:cost_function}
    \ell(\bfth) = \frac{1}{2} \sum_{m=1}^M w_m \lVert \bfp_m - \bff(\bfth; \bfx_m) \rVert_2^2,
\end{equation}
where the weight $w_m$ is the inverse variance of the label $\bfp_m$, i.e., $w_m = 1/\sigma_m^2$.
As shown in the supplementary material, the FIM for the training scenario is given by
\begin{equation}
    \label{eq:fim_configs}
    \CalI (\bfth) = \sum_{m=1}^M w_m \CalI_m(\bfth),
    \quad
    \CalI_m(\bfth) = J_{\bff}^T(\bfth; \bfx_m) J_{\bff}(\bfth; \bfx_m),
\end{equation}
where $\CalI_m$ and $J_{\bff}(\bfth; \bfx_m)$ are the FIM and the Jacobian matrix of $\bff(\bfth; \bfx_m)$ with respect to the parameters $\bfth$ corresponding to the $m$-th datum, respectively \cite{transtrum_geometry_2011}.
This equation highlights a generic, fundamental property of the FIM that the expected information in the entire training data is the sum of information from each independent datum.

Notably, the FIM denotes the expected information over the probability of the labels and does not depend on the observed label value $\bfp_m$.
This property means the FIM can be evaluated for any model predictions, including downstream applications for which ground truth labels are not available, and accounts for its broad appeal in OED.
The FIM then quantifies how much information about the model parameters is required to achieve a target precision.
We denote the target precision of the QoIs $\bfq$ by the covariance matrix $\bfS$; the FIM for the QoIs is given as
\begin{equation}
    \label{eq:fim_qoi}
    \CalJ(\bfth) = J_{\bfg}^T(\bfth) \bfS^{-1} J_{\bfg}(\bfth),
\end{equation}
where $J_{\bfg}(\bfth)$ is the Jacobian matrix of the proxy $\bfg(\bfth; \bfy)$ with respect to the parameters $\bfth$.

The information-matching method leverages the FIMs of both the training data and QoIs (Eqs.~(\ref{eq:fim_configs}) and (\ref{eq:fim_qoi}), respectively) to guarantee that the information in the training data is sufficient to achieve the target precision in the QoIs.
To select the minimal set of training data that achieves this minimal information bound, we solve the following convex problem for the weight vector $\bfw = \begin{bmatrix} w_1 & w_2 & \dots & w_M \end{bmatrix}^T$:
\begin{equation}
    \label{eq:convex_opt}
    \begin{aligned}
	& \text{minimize} && \Vert \bfw \Vert_1 \\
	& \text{subject to} && w_m \geq 0, \\
	& && \CalI = \sum_{m=1}^M w_m \CalI_m \succeq \CalJ.
    \end{aligned}
\end{equation}
We conjecture that for many practical problems, the key information required for precise predictions is contained in a few key data points; therefore, we design the objective function to minimize the $\ell_1$-norm of the weight vector to encourage sparse solutions.
The non-zero weights identify the most important data and the precision with which the labels must be measured to ensure the target precision in the QoIs.

The matrix inequality constraint in Eq.~(\ref{eq:convex_opt}) is crucial for ensuring the target precision on the QoIs.
Formally, it means that the difference $\CalI - \CalJ$ is positive semidefinite.
Intuitively, it indicates that fitting the down-selected data results in smaller parameter variance compared to fitting the QoIs directly, as illustrated in Fig.~\ref{fig:modeling}.
Theorem~\ref{thm:information_matching} formalizes this statement (proof in the SM).

\begin{theorem}
    \label{thm:information_matching}
    Let $\bfg(\bfth; \bfy)$ denote a mapping from the model parameters $\bfth$ to the QoIs for input $\bfy$ that is analytic at $\bfth_0 = \expectation{\bfth}_{\bfth}$, where $\expectation{\cdot}_{\bfth}$ denotes an expectation value over the distribution of parameters.
    Consider parameters of the form $\bfth = \bfth_0 + \epsilon \delta \bfth$.
    If the constraints in Eq.~\eqref{eq:convex_opt} are satisfied, then
    \begin{equation}
	\label{eq:preds_constraint}
	\Cov(\bfg) \preceq \bfS + \bigo{\epsilon^3},
    \end{equation}
    where $\bfS$ is the target covariance of the QoIs.
\end{theorem}

Theorem~\ref{thm:information_matching} states that the uncertainties of the QoIs propagated from the optimal training data [$\Cov(\bfg)$] are within the predefined target uncertainties ($\bfS$), up to third order in $\epsilon$.
The information-matching method is unique in that it simultaneously aims to minimize data usage while ensuring adequate information for precise predictions.
In contrast, the previously mentioned OED criteria (A-, D-, and E-optimality) only prioritize reducing some measure of parameter variance.
By minimizing the number of data points, the information-matching approach not only enhances efficiency but also improves model interpretability by focusing the analysis on only the most critical training data.

We first demonstrate the information-matching method for optimally placing sensors (Phasor Measurement Unit, or PMU) in a power system network.
The goal is to use a few strategically placed PMUs to infer the complex-valued voltages (system states) at every bus.
A PMU placed on a bus measures the bus voltage and currents in adjoining branches, synchronized with GPS time stamps.
By measuring voltages and currents, it is possible to achieve full-state observability without requiring PMUs at every bus.

\begin{figure}[!hbt]
    \centering
    \includegraphics[width=0.45\textwidth]{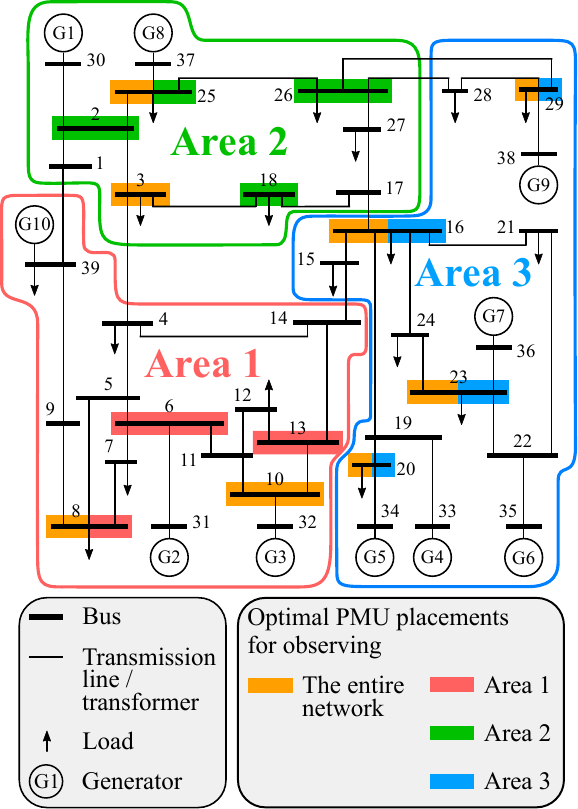}
    \caption[The IEEE 39-bus system]{			 
	\textbf{The IEEE 39-bus system.}
	Buses are represented by thick black lines, while transmission lines and transformers are shown as thin lines connecting them.
        Loads are depicted as black arrows pointing outward from the buses, and the circled labels G1 through G10 indicate the generators.
        Buses highlighted in orange denote the optimal PMU placements for full observability of the entire network.
        Buses highlighted in other colors (red, green, and blue) represent the optimal PMU placements for partial observability in the corresponding area.
	Many buses are double-highlighted with orange and another color, showing overlaps between full and partial observability.
	Non-overlapping optimal buses result from unobserved branches.
    }
    \label{fig:pws_39bus}
\end{figure}

We use the IEEE 39-bus system \cite{ieee39bus}, represented graphically in Fig.~\ref{fig:pws_39bus}, as a benchmark.
Nodes denote buses, and edges represent transmission lines and transformers \cite{yuill_optimal_2011}.
The parameters $\bfth$ to be inferred are the system states (voltage magnitude and angle), and $\bff$ represents PMU measurements.
Full parameter identifiability implies that the QoIs are the state variables themselves [$\bfg(\bfth; \bfy) = \bfth$] and the QoI FIM is non-singular.
We enforce this condition by setting $\CalJ = \lambda I$ for some small $\lambda > 0$.
The information-matching condition implies that $\CalI$ is also non-singular, and all states are observable.
The optimal PMU placements for this problem are represented as the orange highlighted buses in Fig.~\ref{fig:pws_39bus}.
On this initial benchmark test, the information-matching method naturally selects the same buses identified in previous studies \cite{milosevic_nondominated_2003,chakrabarti_optimal_2008_edited}, even without preassigning PMU locations.
Furthermore, this result is broadly consistent with prior findings \cite{peng_optimal_2006,aminifar_contingency-constrained_2010_edited,yuill_optimal_2011}.

In practice, the power system analysts model reduced portions of the full network, focusing on regions under their direct control.
However, these areas are influenced by states outside the target areas, and identifying an appropriate reduced area equivalent is a challenging task \cite{saric2018data,zhao_power_2019_edited}.
We next partition the IEEE 39-bus system into non-overlapping regions (indicated by red, green, and blue in  Fig.~\ref{fig:pws_39bus}) and seek a minimal set of sensors to achieve observability within each area \cite{matpowerDescriptionCase39},
 without regard for states outside it.
We implement this by setting the diagonal elements of $\CalJ$ corresponding to external states to zero, allowing infinite uncertainties for those states.
The optimal PMU locations for each area are shown in Fig.~\ref{fig:pws_39bus} as buses are highlighted in different colors based on their respective areas.
Notably, there are overlaps (double-highlighted buses) between optimal PMU placements for full and (the union of) three subnetworks, while non-overlapping locations are a consequence of enforcing observability for each of the subnetworks separately.

\begin{figure}[!hbt]
    \centering
    \includegraphics[width=0.45\textwidth]{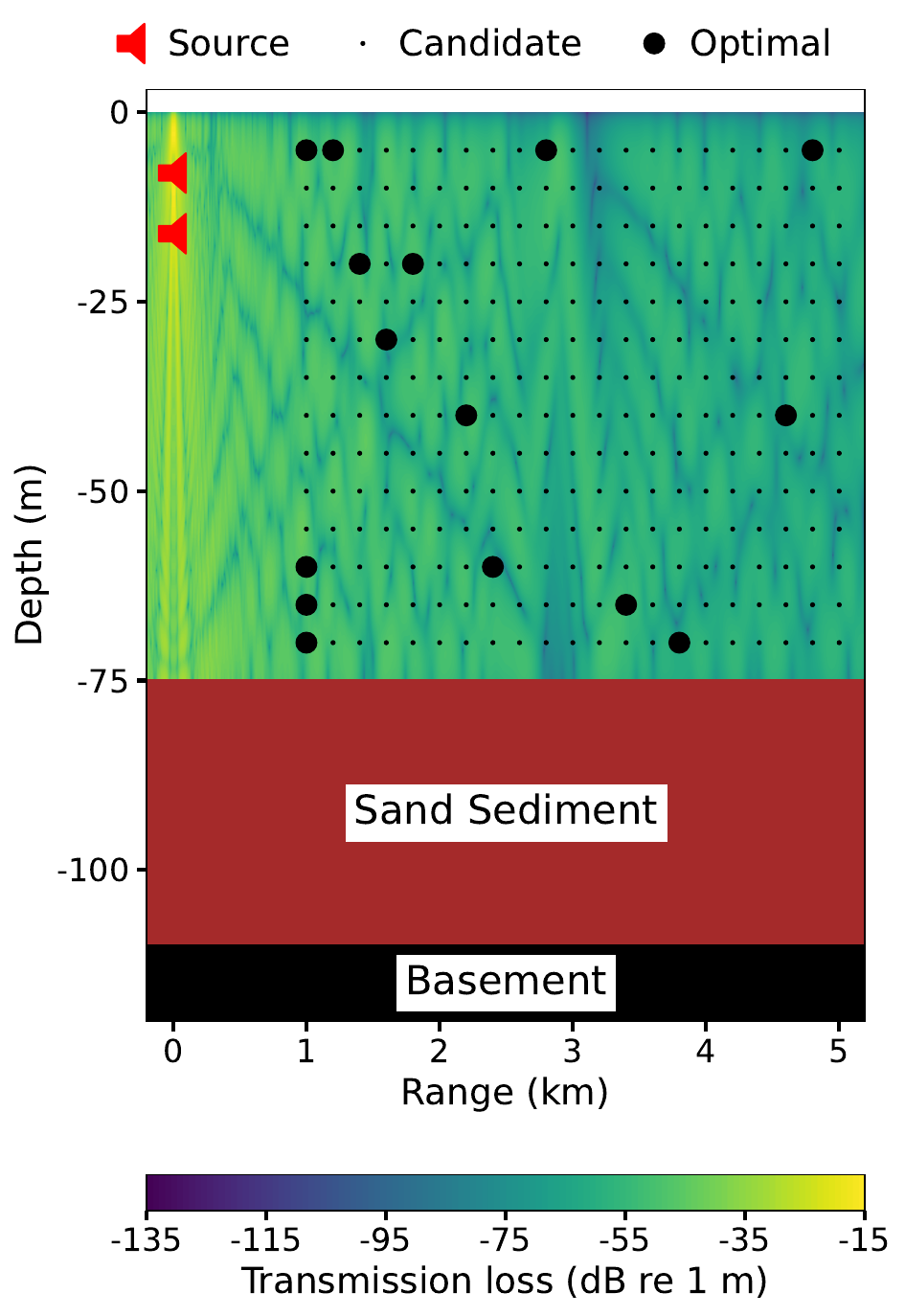}
    \caption[Source localization in a shallow ocean]{
	\textbf{Source localization in a shallow ocean.}
        Optimal receiver locations for localizing two sound sources (red speakers) in a shallow ocean with a sandy seabed using transmission loss data at 200~Hz.
        Small dots indicate candidate sites; large dots are the optimal receiver locations.
    }
    \label{fig:orca_sand_f200}
\end{figure}

Next, we consider an optimal sensor placement problem in passive acoustic source localization in the ocean.
The objective is to determine the optimal sensor (sound receiver) locations to infer the location of sound sources in a shallow ocean.
This problem is difficult because sound propagation depends in complicated ways on unknown properties of the ocean environment, including the water temperature and the sediments in the seabed.
Traditionally, source localization requires first estimating the ocean environmental parameters, as is done in matched-field processing \cite{tolstoy:1993:matched}.
In contrast, information-matching allows us to learn only those combinations of acoustic parameters that are necessary to infer the source location, avoiding full environmental inversion when it is not required.

We aim to localize two vertically separated sound sources at depths of 8 and 16~m (red speakers) within $\pm$ 2.5~m vertically and $\pm$ 100~m horizontally (Fig.~\ref{fig:orca_sand_f200}).
Candidate receivers are arranged in a rectangular grid (small dots), motivated by common practices of using vertical and horizontal line arrays for ocean sound measurements \cite{wood_optimisation_2003,dosso_optimal_1999,barlee_array_2002,dosso_array_2006,buck_information_2002}.
We use a range-independent normal mode model, called ORCA \cite{westwood_normal_1996}, to simulate the sound propagation in the ocean and compute the transmission loss at 200~Hz for each candidate location illustrated as small dots in Fig.~\ref{fig:orca_sand_f200} (details in the SM).
The model parameters $\bfth$ for simulating sound propagation include the source and receiver locations, as well as the parameterization of the ocean environment.
The ocean environment is modeled with 75~m-deep water above an ocean floor consisting of a sandy sediment layer on a half-space basement layer.

To localize the two sources using Eq.~(\ref{eq:convex_opt}), the QoI FIM $\CalJ$ is a diagonal matrix, where diagonal elements that correspond to source locations are set to their inverse target precision, and all other elements are zero, indicating that only the relevant environmental parameters for localizing the sources are constrained as needed.
The optimal receiver locations are shown as the large dots in Fig.~\ref{fig:orca_sand_f200}, which account for only 5\% of the candidate locations.

The FIM is a local quantity that can vary (sometimes significantly) for different parameter values.
In the preceding examples, we assumed a reasonable prior estimate of the models' parameters (e.g., the bus voltage phasor and the ocean environmental parameters), eliminating the need to recalculate the FIM after fitting with the optimal data.
However, parameters often exhibit significant variability in response to new data, so we need to optimize the parameters alongside the data.
To address this, we extend the OED problem to an Active Learning (AL) strategy and use Eq.~(\ref{eq:convex_opt}) as a data query function.

\begin{algorithm}[H]
    \caption{Active learning via information-matching}
    \label{alg:active_learning}
    \begin{algorithmic}[1]
	\Initialize{
	    $\bfX \gets \{\bfx_m\}_{m=1}^M$  \Comment{Candidate input data} \\
	    $\bfP \gets \text{empty}(M)$  \Comment{To store labels} \\
	    $\bfw^{\text{opt}} \gets \text{zeros}(M)$  \Comment{To store optimal weights} \\
	    $\bfth_0 \gets \bfth$  \Comment{Initial parameters}
	}
        \While{True}
	    \State Compute $\CalJ(\bfth_0)$ using Eq.~(\ref{eq:fim_qoi})
	    \For{$m = 1:M$}
		\State Compute $\CalI_m (\bfth_0)$ using Eq.~(\ref{eq:fim_configs})
	    \EndFor

	    \State $\bfw \gets $ Solve Eq.~(\ref{eq:convex_opt})
	    \State $\bfw^{\text{opt}} \gets \{ \max(w_m^{\text{opt}}, w_m), ~\forall m = 1:M \}$
	    \If{$\bfw^{\text{opt}}$ converge}
		\State break
	    \Else
		\ForAll{$ \{m\,\vert\, w_m^{\text{opt}} > 0\} $}
		\State $\bfp_m \gets$ Generate label for $\bfx_m$
		
		\EndFor
		\State  $\bfth_0 \gets \argmin_{\bfth} \ell(\bfth)$  \Comment{Update parameters}
	    \EndIf

	\EndWhile
    \end{algorithmic}
\end{algorithm}

Algorithm~\ref{alg:active_learning} outlines the iterative AL process based on information-matching.
We begin by preparing a pool of candidate inputs $\bfX = \{\bfx_m\}_{m=1}^M$ and initializing the parameters to the \emph{a~priori} best estimate $\bfth_0$.
The procedure starts by evaluating $\CalJ$ and $\CalI_m$ at $\bfth_0$ for all $m$ inputs and solving Eq.(\ref{eq:convex_opt}) for the weights $\bfw$.
For each datum, we update the optimal weight by comparing the new optimal value with the current one, retaining the larger one.
This step ensures that the amount of information in the training data in subsequent iterations is non-decreasing.
Convergence occurs when the change in optimal weights between subsequent iterations is below some chosen threshold.
If not converged, we generate labels for data with nonzero weights and update the parameters by minimizing Eq.~(\ref{eq:cost_function}), then iterate.

As an example, we apply this AL algorithm to the development of interatomic potentials in materials science.
Interatomic potentials are crucial in atomistic simulations as they approximate the interaction energy between atoms \cite{Tadmor_Modeling_Materials}.
These potentials are typically trained on small-scale quantities, e.g., energy and atomic forces obtained from computationally expensive first-principles calculations \cite{ercolessi_interatomic_1994,wen_force-matching_2017_edited}, then used in larger-scale simulations to predict material properties.
Despite the scale discrepancy between training and prediction, the dynamics of atoms primarily depend on their local neighborhoods.
Thus, our objective is to identify training data (atomic configurations) that are maximally informative about the atomic neighborhoods for precise material predictions.

We apply Algorithm~\ref{alg:active_learning} to develop an optimal 15-parameter Stillinger--Weber (SW) potential for molybdenum disulfide (MoS$_2$) to precisely predict the energy ($E$) as a function of lattice parameter ($a$) under uniform in-plane strain \cite{wen_force-matching_2017_edited,kurniawan_bayesian_2022,OpenKIM_SW_MoS2_driver,OpenKIM_SW_MoS2}.
The predictions are shifted by $E_c$ (the energy at the equilibrium lattice constant $a_0$) to align the minimum with the origin, effectively showing strain-induced energy changes.
The candidate dataset comprises 2000 atomic configuration snapshots from an \emph{ab-initio} molecular dynamics trajectory at 750~K, each with 96 Mo and 192 S atoms \cite{wen_force-matching_2017_edited, wen_dataset_2024}.
While this dataset contains force labels, in practice, the candidate dataset does not need to include labels; Algorithm~\ref{alg:active_learning} generates them on demand for the optimal configurations.
We choose the target precision to be 10\% of the values predicted by the potential trained on the full dataset \cite{wen_force-matching_2017_edited, wen_dataset_2024}.
In Fig.~\ref{fig:ips_swmos2}, we compare the uncertainties of the QoI obtained from the optimal configurations (red envelope) with the target uncertainty (blue envelope).
Our findings indicate that seven atomic configurations suffice to constrain the parameters and achieve the target precision.

While some details of the calculation---the selected configurations, optimal weights, and optimal parameters---depend on the choice of initial parameters, these differences do not affect the validity of the final estimated uncertainty.
As long as the problem is feasible, the resulting prediction uncertainty estimated by the FIM is guaranteed to be within the target uncertainty.
This dependence on initialization is examined in more detail in the SM, where we provide numerical examples based on the SW MoS$_2$ model.
Additional results for other SW potentials for silicon are also provided in the SM.

\begin{figure}[!hbt]
    \centering
    \includegraphics[width=0.45\textwidth]{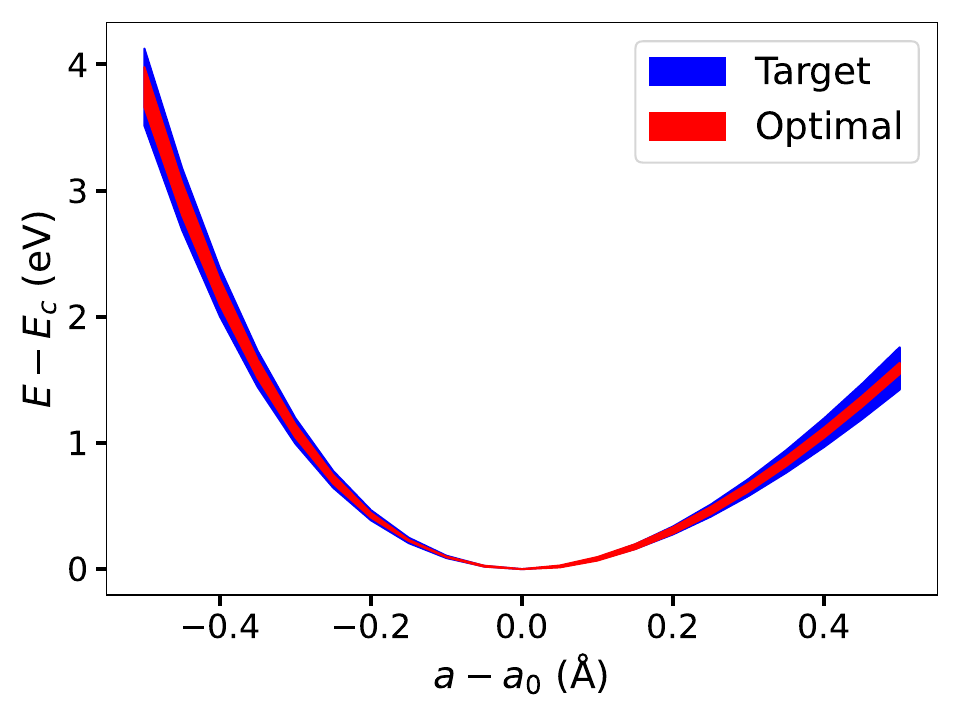}
    \caption[Uncertainty in the energy ($E$) of monolayer MoS$_2$ versus in-plane lattice parameter ($a$)]{
	\textbf{Uncertainty in the energy ($E$) of monolayer MoS$_2$ versus in-plane lattice parameter ($a$)}.
	Predictions are shifted by the energy $E_c$ at the equilibrium lattice constant $a_0$, aligning the minimum with the origin.
	The blue envelope is the target uncertainty (10\% of the values predicted by the potential trained on the full dataset). In contrast, the red envelope shows the uncertainty propagated from the seven optimal training atomic configurations.
	Notice that the optimal propagated uncertainty is smaller than the target uncertainty.
    }
    \label{fig:ips_swmos2}
\end{figure}



In summary, we have introduced an information-matching method to identify a minimal set of informative data points to meet target precision requirements for downstream QoIs.
We have demonstrated the versatility of the approach across diverse domains, including power system networks, underwater acoustics, and AL for interatomic potential development.
Unlike classical FIM-based OED criteria that aim at reducing global parameter uncertainty, information-matching optimizes a fundamentally different objective: it matches the information content of the data to the precision required for the QoIs.
As a result, only those parameters that influence the QoI are constrained---and only to the extent needed to satisfy the target QoI precision---while others may remain unconstrained if they are irrelevant.
This distinction highlights that comparisons with traditional OED methods must be interpreted carefully, as the two frameworks prioritize different aspects of the inference problem and therefore tend to yield different designs.
The accompanying theorem additionally formalizes this perspective and shows that QoI precision can be controlled without explicitly estimating the full parameter uncertainty.
This opens new possibilities in settings where full parameter identifiability is unnecessary or infeasible, such as PMU placement for partially observable networks or localization of ocean acoustic sources without environmental inversion.
Broader applications are likely in areas such as biology, neuroscience, geology, and atmospheric science, where models often contain many weakly identifiable parameters but well-defined QoIs \cite{quinn_information_2022}.

Despite these advantages, successfully solving Eq.~(\ref{eq:convex_opt}) depends on the richness of the initial candidate dataset; if any parameter that is relevant for the QoIs cannot be constrained by any of the candidate datasets, the optimization is infeasible.
Because Eq.~(\ref{eq:convex_opt}) is convex, algorithms will detect when a problem is infeasible.
In such a case, it would be desirable to augment the candidate datasets with more informative candidates.  
Identifying such additional data points to achieve feasibility remains an open, problem-dependent question.
Our current practical approach is to generate large and diverse candidate datasets.
This can often be done quickly and cheaply.
For example, in interatomic potential development, many atomic configurations with varied lattice parameters can be generated, while in sensor placement problems, one may define a fine grid of potential sensor locations.
Because Eq.~(\ref{eq:convex_opt}) is convex and scales favorably with problem size, this approach is practical in many scenarios.

Future work may extend the method to larger models and machine-learning applications, including machine-learned interatomic potentials.
Investigating the feasibility and potential advantages of such integration could unlock broader applications and insights.
Finally, a deeper theoretical analysis of the limiting behavior of the optimization may offer valuable guidance on robustness and further broaden applicability across diverse scientific domains.

\section*{Supplementary Material}

The supplementary material provides the mathematical formulation of the least-squares problem and the associated uncertainty quantification framework.
It also includes a proof of the information-matching theorem, detailed descriptions of the models used in the example test cases, and additional results for those models.

\begin{acknowledgments}
    YK, MKT, IN, EBT, and AMS acknowledge partial support through NSF under Grants No.\ 1834251, 1834332, and 2223986.
    YK, IN, EBT, VVB, VL, and MKT acknowledge funding support from the Laboratory Directed Research and Development program (project code 23-SI-006) and a special computational time allocation on the Lassen supercomputer from the Computational Grand Challenge program at Lawrence Livermore National Laboratory.
    Portions of this work were performed under the auspices of the U.S.\ Department of Energy by Lawrence Livermore National Laboratory under Contract DE-AC52-07NA27344.
    TBN and MKT acknowledge that this work is related to the Department of the Navy Award N00014-24-12566.
    Any opinions, findings, conclusions, or recommendations expressed in this material are those of the authors and do not necessarily reflect the views of the Office of Naval Research.
    Some of the calculations were conducted using computational facilities provided by the Brigham Young University Office of Research Computing.
\end{acknowledgments}

\section*{Data Availability Statement}

The data that support the findings of this study are openly available in Figshare at \href{http://doi.org/10.6084/m9.figshare.27274029.v1}{http://doi.org/10.6084/m9.figshare.27274029.v1}, reference number~[\citenum{Kurniawan2024}].
The corresponding code is available in Github, reference number~[\citenum{information-matching_repo}].

\section*{AUTHOR DECLARATION}
\subsection*{Conflict of Interest}
The authors have no conflicts to disclose.
\subsection*{Author Contributions}
\textbf{Yonatan Kurniawan:} Conceptualization (equal); Formal Analysis (equal); Investigation (lead); Methodology (equal); Software (lead); Writing---original draft (lead).
\textbf{Tracianne Neilsen:} Data curation (equal); Writing---original draft (equal).
\textbf{Benjamin Francis:} Data curation (equal); Writing---original draft (equal).
\textbf{Alex Stankovic:} Writing---review \& editing (equal).
\textbf{Mingjian Wen:} Data curation (equal); Writing---review \& editing (equal).
\textbf{Ilia Nikiforov:} Writing---review \& editing (supporting).
\textbf{Ellad Tadmor:} Writing---review \& editing (equal).
\textbf{Vasily Bulatov:} Writing---review \& editing (equal).
\textbf{Vincenzo Lordi:} Writing---review \& editing (equal).
\textbf{Mark Transtrum:} Conceptualization (equal); Formal analysis (equal); Methodology (equal); Writing---original draft (equal).

\bibliographystyle{unsrt}
\bibliography{refs.bib,refs_zotero.bib}

\end{document}


\title{Supplementary Material: An information-matching approach to optimal experimental design and active learning}
\author{Yonatan Kurniawan}
\affiliation{Brigham Young University, Provo, UT 84602, USA}
\author{Tracianne B.~Neilsen}
\affiliation{Brigham Young University, Provo, UT 84602, USA}
\author{Benjamin L.~Francis}
\affiliation{Achilles Heel Technologies, Orem, UT 84097, USA}
\author{Alex M.~Stankovic}
\affiliation{SLAC National Accelerator Laboratory, Menlo Park, CA, USA}
\author{Mingjian Wen}
\affiliation{University of Electronic Science and Technology of China, Chengdu, 611731, China}
\author{Ilia Nikiforov}
\author{Ellad B.~Tadmor}
\affiliation{University of Minnesota, Minneapolis, MN 55455, USA}
\author{Vasily V.~Bulatov}
\author{Vincenzo Lordi}
\affiliation{Lawrence Livermore National Laboratory}
\author{Mark K.~Transtrum}
\email{mktranstrum@byu.edu}
\affiliation{Brigham Young University, Provo, UT 84602, USA}
\affiliation{Achilles Heel Technologies, Orem, UT 84097, USA}
\affiliation{SLAC National Accelerator Laboratory, Menlo Park, CA, USA}

\maketitle

\section{Least-squares regression and uncertainty propagation}
\label{sec:least-squares}

Consider a dataset consisting of $M$ observations $\{\bfp_m\}_{m=1}^M$ taken at input values $\{\bfx_m\}_{m=1}^M$.
The data is defined as the set of pairs $\{\bfx_m, \bfp_m\}_{m=1}^M$ for input values $\bfx_m$ and the corresponding labels $\bfp_m$.
We model the relationship between the input $\bfx_m$ and the label $\bfp_m$ using a model $\bff(\bfth; \bfx_m)$ parametrized by $\bfth$.
We further assume that the model can reproduce the observation within some additive random noise $\epsilon_m$,
\begin{equation*}
    \bfp_m = \bff(\bfth; \bfx_m) + \epsilon_m.
\end{equation*}
Motivated by the central limit theorem, it is common to assume that the noise follows a Gaussian distribution with mean zero and variance $\sigma_m^2$, i.e., $\epsilon_m \sim \mathcal{N}(0, \sigma_m^2)$.
This is equivalent to treating the label $\bfp_m$ as a Gaussian random variable, where the model prediction $\bff(\bfth; \bfx_m)$ serves as the mean and $\sigma_m^2$ as the variance of the label, providing a more interpretable measure of uncertainty.

Assuming independent data, the joint likelihood of the model given the observed labels is given by
\begin{equation}
    \label{eq:gaussian_likelihood}
    L(\bfth | \bfp) \propto \exp \left( -\frac{1}{2} \sum_{m=1}^M \frac{\lVert \bfp_m - \bff(\bfth; \bfx_m) \rVert_2^2}{\sigma_m^2} \right).
\end{equation}
The maximum likelihood estimator (MLE) is obtained by maximizing Eq.~\eqref{eq:gaussian_likelihood}.
Furthermore, the negative log-likelihood forms a commonly used weighted least-squares loss function (up to an additive constant),
\begin{equation}
    \label{eq:least-squares_loss}
    \ell(\bfth) = \frac{1}{2} \sum_{m=1}^M w_m \lVert \bfp_m - \bff(\bfth; \bfx_m) \rVert_2^2,
\end{equation}
where $w_m = 1/\sigma_m^2$ acts as the weight for each datum.
Since the logarithmic transformation is monotonic, then maximizing Eq.~\eqref{eq:gaussian_likelihood} is equivalent to a familiar least-squares regression that minimizes the loss function Eq.~\eqref{eq:least-squares_loss}.

The uncertainty inherent in the labels of a regression model propagates to uncertainty in the estimated parameters.
A fundamental object in parametric uncertainty quantification is the Fisher Information Matrix (FIM), which serves as a measure of the information content in the data about the model parameters.
Additionally, the FIM sets a lower bound on the covariance of the inferred parameters, known as the Cram\'{e}r-Rao bound\cite{streiner_precision_2006}.
This bound provides important insights into the precision of the estimated model parameters.

The FIM is defined as the expectation value of the Hessian of the log-likelihood with respect to the distributions of the labels.
Assuming independent data, the joint likelihood of the model given the observed labels is a product of the likelihood function given individual labels,
\begin{equation*}
    L(\bfth | \bfp) = \prod_{m=1}^M L(\bfth | \bfp_m).
\end{equation*}
Then, the FIM is given by
\begin{equation}
    \label{eq:fim}
	\CalI(\bfth) = \expectation{-\frac{\partial^2 \log L(\bfth | \bfp)}{\partial \bfth^2}}_\bfP
	= \sum_{m=1}^M \expectation{-\frac{\partial^2 \log L(\bfth | \bfp_m)}{\partial \bfth^2}}_{\bfP_m}
	= \sum_{m=1}^M \CalI_m (\bfth),
\end{equation}
where $\expectation{\cdot}_\bfP$ and $\expectation{\cdot}_{\bfP_m}$ denote the expectation value over the joint probability of the entire labels and over the probability of a single label $\bfp_m$, respectively, and $\CalI_m$ is the FIM for datum $\bfx_m$.
Equation~\eqref{eq:fim} highlights a generic, fundamental property of the FIM that the expected information in the entire dataset is the accumulation of information from each independent datum.

Restricting to a weighted least-squares problem, the negative log-likelihood is given by Eq.~\eqref{eq:least-squares_loss} and the FIM is given as
\begin{equation}
    \label{eq:fim_derivation}
    \begin{aligned}
      \CalI(\bfth) &= \expectation{-\frac{\partial^2 \log L(\bfth | \bfp)}{\partial \bfth^2}}_\bfP \\
			  &= \sum_{m=1}^M w_m \expectation{-\frac{1}{2}
			    \frac{\partial^2}{\partial \bfth^2} \lVert \bfp_m - \bff(\bfth; \bfx_m) \rVert_2^2
			    }_{\bfP_m} \\
			  &= \sum_{m=1}^M w_m J_{\bff}^T(\bfth; \bfx_m) J_{\bff}(\bfth; \bfx_m),
    \end{aligned}
\end{equation}
where we have set $\expectation{\bfp_m - \bff(\bfth; \bfx_m)}_{\bfP_m} = 0$ from the assumption of the noise and denote $J_{\bff}(\bfth; \bfx_m)$ as the Jacobian matrix of $\bff(\bfth; \bfx_m)$ for input $\bfx_m$ with respect to the parameters $\bfth$ \cite{transtrum_geometry_2011}.
The elements of $J_{\bff}(\bfth; \bfx_m)$ are calculated by
\begin{equation}
    \label{eq:jacobian}
    (J_{\bff})_{ij} (\bfth; \bfx_m) = \frac{\partial \bff_i (\bfth; \bfx_m)}{\partial \bfth_j},
\end{equation}
where $\bff_i(\bfth; \bfx_m)$ is the $i$-th element of the model output $\bff(\bfth; \bfx_m)$ and $\bfth_j$ is the $j$-th parameter.
With a slight abuse of notation and to overshadow the formulation used in the information-matching approach, we factor out the inverse data variance, i.e., weight, from the FIM and define the FIM for the $m$-th data with a unit data variance as
\begin{equation}
    \label{eq:fim_candidate}
    \CalI_m(\bfth) = J_{\bff}^T(\bfth; \bfx_m) J_{\bff}(\bfth; \bfx_m).
\end{equation}
Thus, the FIM for a weighted least-squares problem can be written as
\begin{equation}
    \label{eq:fim_least-squares}
    \CalI (\bfth) = \sum_{m=1}^M w_m \CalI_m (\bfth).
\end{equation}

The utility of a mathematical model often extends beyond parameter inference and into the realm of making new predictions of some quantities of interest (QoIs).
When making these predictions, the uncertainty associated with the parameters is further propagated to the QoIs.
The uncertainty of the QoIs directly impacts the reliability and credibility of the model predictions.

We denote the QoIs for input $\bfy$ as $\bfq$ and the model that approximates it as $\bfg(\bfth; \bfy)$.
The input $\bfy$ to the QoIs $\bfq$ is analogous to the input $\bfx_m$ for the training label $\bfp_m$, although $\bfy$ may also act as a control parameter for the QoIs or a discrete index to distinguish between different QoI values.
Then, consider the parameters $\bfth = \bfth_0 + \epsilon \delta \bfth$ for a perturbation magnitude $\epsilon$, where $\bfth_0 = \expectation{\bfth}_{\bfTh}$ and $\expectation{\cdot}_{\bfTh}$ denotes the expectation value over the distribution of the parameters.
The Maclaurin series of the predictions $\bfg(\bfth; \bfy)$ in $\epsilon$ is given by
\begin{equation*}
    \bfg(\bfth; \bfy) = \bfg(\bfth_0; \bfy) + J_{\bfg}(\bfth_0) \epsilon\delta\bfth + \bigo{\epsilon^2},
\end{equation*}
where $J_{\bfg}(\bfth)$ is the Jacobian matrix of the mapping $\bfg(\bfth; \bfy)$ with respect to the parameters $\bfth$, calculated in a similar manner as Eq.~\eqref{eq:jacobian}.
Then, the expectation value of the predictions $\bfg(\bfth; \bfy)$ can be expressed as
\begin{equation*}
    \expectation{\bfg}_{\bfTh} = \bfg(\bfth_0) + \bigo{\epsilon^2},
\end{equation*}
where we have set $\expectation{\delta\bfth}_{\bfTh} = 0$, given that $\bfth_0 = \expectation{\bfth}_{\bfTh}$, and set $\expectation{\bigo{\epsilon^2}}_{\bfTh} = \bigo{\epsilon^2}$ since $\bigo{\epsilon^2}$ is independent of $\bfth$.
Furthermore, using the definition of the covariance matrix,
\begin{equation*}
    \begin{aligned}
      \Cov(\bfg) &= \expectation{(\bfg - \expectation{\bfg}_{\bfTh}) (\bfg - \expectation{\bfg}_{\bfTh})^T}_{\bfTh} \\
	      &= \expectation{\left(J_{\bfg}(\bfth_0) \epsilon\delta\bfth + \bigo{\epsilon^2}\right) \left(J_{\bfg}(\bfth_0) \epsilon\delta\bfth + \bigo{\epsilon^2}\right)^T}_{\bfTh} \\
	      &= J_{\bfg}(\bfth_0) \expectation{(\epsilon\delta\bfth)(\epsilon\delta\bfth)^T}_{\bfTh} J_{\bfg}(\bfth_0)^T + \expectation{\bigo{\epsilon^3}}_{\bfTh}.
    \end{aligned}
\end{equation*}
By noticing $\expectation{(\epsilon\delta\bfth)(\epsilon\delta\bfth)}_{\bfTh} = \Cov(\bfth)$ and  $\expectation{\bigo{\epsilon^3}}_{\bfTh} = \bigo{\epsilon^3}$, the predictions uncertainty of the QoIs is thus given by the covariance matrix
\begin{equation}
    \label{eq:uncertainty_target}
    \Cov(\bfg) = J_{\bfg}(\bfth_0) \Cov(\bfth) J_{\bfg}^T(\bfth_0) + \bigo{\epsilon^3}.
\end{equation}

\section{Information-matching method}
\label{sec:info_match}

In many situations, collecting data is an expensive process, while the resulting data can often be redundant.
Optimal experimental design (OED) and active learning (AL) provide an effective strategy for data acquisition.
These methodologies help identify the most important data to collect in order to meet specific criteria and improve overall efficiency.

In this work, we introduce an information-matching approach that identifies a minimal set of data containing the necessary information to achieve the desired precision for the QoIs.
This approach leverages the FIM in Eq.~(\ref{eq:fim_least-squares}) and that representing the required information to attain the target precision of the QoIs,
\begin{equation}
    \label{eq:fim_target}
    \CalJ (\bfth) = J_{\bfg}^T(\bfth) \bfS^{-1} J_{\bfg}(\bfth),
\end{equation}
where $\bfS$ is the target covariance matrix of the QoIs.
The optimal data are selected by solving the following convex problem for the weight vector $\bfw = \begin{bmatrix} w_1 & w_2 & \dots & w_M \end{bmatrix}^T$,
\begin{equation}
    \label{eq:convex_opt}
    \begin{aligned}
	& \text{minimize} && \Vert \bfw \Vert_1 \\
	& \text{subject to} && w_m \geq 0, \\
	& && \CalI \succeq \CalJ,
    \end{aligned}
\end{equation}
where the matrix inequality implies $\CalI - \CalJ$ is positive semidefinite.
Data points with zero weights imply that infinite uncertainties are required in the label measurements, indicating that these data are insignificant.
Conversely, non-zero weights identify the most important data points and the precision with which they must be measured to ensure the target precision in the QoIs.

Numerically, we solve Eq.~\eqref{eq:convex_opt} using the CVXPY Python package \cite{diamond2016cvxpy,agrawal2018rewriting} with either the Semidefinite Programming Algorithm (SDPA) \cite{doi:10.1080/1055678031000118482,Yamashita2012,doi:10.1109/CACSD.2010.5612693,Kim2011} or Splitting Conic Solver (SCS) \cite{ocpb:16,odonoghue:21,aa2020}.
A modification of Eq.~\eqref{eq:convex_opt} with an additional binary constraint on $w_m$ can also be used, in which case the SCIP-SDP solver \cite{scip,scip_sdp} can be applied to solve the resulting mixed-integer semidefinite programming (MISDP) problem.
These solvers reformulate Eq.~\eqref{eq:convex_opt} into its equivalent dual problem and introduce Lagrange multipliers (dual values) to enforce the constraints during the optimization process.
Although minimizing the $\ell_1$-norm of the weight vectors encourages sparsity, in practice, many weights remain very small but are treated as zero by the solvers.
To unambiguously identify the weights that are effectively nonzero, we examine the dual values of the weights corresponding to the non-negativity constraint (denoted as $\tilde{w}_m$) and compare them with the solver's tolerance.
For a solver tolerance $\varepsilon$, the weight $w_m$ is considered effectively nonzero if $w_m > \varepsilon$ and $\tilde{w}_m < \varepsilon$.
The first condition ensures $w_m$ is distinguishable from zero within the solver’s precision, while the second indicates that the non-negativity constraint on $w_m$ is not binding, implying the weight is naturally positive.

\section{Theorem and proof}
\label{sec:proof}

\begin{theorem}
    \label{thm:information_matching}
    Let $\bfg(\bfth; \bfy)$ denote a mapping from the model parameters $\bfth$ to the QoIs that is analytic at $\bfth_0 = \expectation{\bfth}_{\bfTh}$, where $\expectation{\cdot}_{\bfTh}$ denotes an expectation value over the distribution of parameters.
    Consider parameters of the form $\bfth = \bfth_0 + \epsilon \delta \bfth$.
    If the constraints in Eq.~\eqref{eq:convex_opt} are satisfied, then
    \begin{equation}
	\label{eq:preds_constraint}
	\Cov(\bfg) \preceq \bfS + \bigo{\epsilon^3},
    \end{equation}
    where $\bfS$ is the target covariance of the QoIs.
\end{theorem}

The strategy to proof Theorem~\ref{thm:information_matching} is to first show that the constraints in Eq.~\eqref{eq:convex_opt} leads to
\begin{equation}
    \label{eq:preds_constraint_lin}
    J_{\bfg} \Cov(\bfth) J_{\bfg}^T \preceq \bfS.
\end{equation}
Then, we show that Eq.~\eqref{eq:preds_constraint} can be obtained by combining Eqs.~\eqref{eq:uncertainty_target} and \eqref{eq:preds_constraint_lin}.

We start by considering the eigenvalue decomposition $\CalI = \CalV \Lambda \CalV^T$ and partitioning $\CalV$ and $\Lambda$ as
\begin{equation*}
    \CalV = \begin{bmatrix} \CalV_1 & \CalV_2 \end{bmatrix}
    \quad \text{and} \quad
    \Lambda = \begin{bmatrix} \Lambda_1 & 0 \\ 0 & 0 \end{bmatrix}.
\end{equation*}
The columns of $\CalV_1$ and $\CalV_2$ span the column space and nullspace of $\CalI$, respectively.
Then, from the matrix inequality constraint in Eq.~\eqref{eq:convex_opt}, we multiply both sides with $\CalV_1^T$ on the left and $\CalV_1$ on the right,
\begin{equation}
    \label{eq:mat_ineq}
    \CalV_1^T \CalI \CalV_1 \succeq \CalV_1^T \CalJ \CalV_1.
\end{equation}
The left-hand side of this equation is invertible, while the right-hand side is not necessarily, which is a consequence of the following Lemma.

\begin{lemma}
    \label{lemma:nullspace_fims}
    Given two positive semidefinite matrices $\CalI$ and $\CalJ$.
    If $\CalI \succeq \CalJ$, then
    \begin{equation*}
	N(\CalI) \subseteq N(\CalJ),
    \end{equation*}
    where $N(\cdot)$ denotes the nullspace of the argument matrix.
\end{lemma}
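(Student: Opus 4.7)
The plan is to take an arbitrary vector $v$ in the nullspace of $\CalI$ and show that it must also lie in the nullspace of $\CalJ$, exploiting the positive semidefiniteness of both matrices together with the Löwner ordering $\CalI \succeq \CalJ$.

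First, I would translate the statement $v \in N(\CalI)$ into the scalar identity $v^T \CalI v = 0$, which follows immediately from $\CalI v = 0$. Next, I would use the defining property of the ordering, namely that $\CalI - \CalJ$ is positive semidefinite, to conclude $v^T \CalJ v \leq v^T \CalI v = 0$. Combining this with $v^T \CalJ v \geq 0$ (since $\CalJ$ is itself positive semidefinite) pins down the equality $v^T \CalJ v = 0$.

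The key step that turns this scalar equality back into a vector statement is to use a symmetric square-root (or Cholesky) factorization $\CalJ = L L^T$, available precisely because $\CalJ \succeq 0$. Then $v^T \CalJ v = \lVert L^T v \rVert_2^2 = 0$ forces $L^T v = 0$, which immediately gives $\CalJ v = L (L^T v) = 0$, i.e., $v \in N(\CalJ)$. Since $v \in N(\CalI)$ was arbitrary, the inclusion $N(\CalI) \subseteq N(\CalJ)$ follows.

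I do not anticipate a real obstacle here; the only subtle point is noticing that going from the quadratic form $v^T \CalJ v = 0$ to $\CalJ v = 0$ is not automatic for general symmetric matrices but does hold for positive semidefinite ones via the square-root argument. All other steps are direct applications of the definitions of positive semidefiniteness and the Löwner ordering.
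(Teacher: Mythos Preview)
Your proof is correct and follows essentially the same route as the paper: take $v \in N(\CalI)$, use $\CalI \succeq \CalJ$ to obtain $v^T \CalJ v \leq 0$, combine with positive semidefiniteness of $\CalJ$ to get $v^T \CalJ v = 0$, and conclude $\CalJ v = 0$. Your explicit square-root factorization $\CalJ = L L^T$ to pass from $v^T \CalJ v = 0$ to $\CalJ v = 0$ is in fact more careful than the paper, which simply asserts that implication without justification.
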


\begin{proof}
    Let $v \in N(\CalI)$.
    By definition, the matrix inequality $\CalI \succeq \CalJ$ implies
    \begin{align*}
      & v^T \left( \CalI - \CalJ \right) v \geq 0 \\
      \Rightarrow & v^T \CalI v - v^T \CalJ v \geq 0 \\
      \Rightarrow & - v^T \CalJ v \geq 0.
    \end{align*}
    Since $\CalJ$ is positive semidefinite, it implies that $\CalJ v = 0$ and $v \in N(\CalJ)$.
    Next, suppose $u \in N(\CalJ)$.
    With similar steps, we arrive at $u^T \CalI u \geq 0$, which is always satisfied for a positive definite matrix $\CalI$.
\end{proof}

Continuing from Eq.~\eqref{eq:mat_ineq}, it is known that if the right-hand side is also invertible, then it follows that \cite{horn_johnson_2013,bhatia_2007}
\begin{equation}
    \label{eq:mat_ineq_inv}
    \begin{aligned}
      \CalV_1^T \CalI \CalV_1 &\succeq \CalV_1^T \CalJ \CalV_1 \\
      \Rightarrow (\CalV_1^T \CalI \CalV_1)^{-1} &\preceq (\CalV_1^T \CalJ \CalV_1)^{-1} \\
      \Rightarrow (J_{\bfg} \CalV_1) (\CalV_1^T \CalI \CalV_1)^{-1} (J_{\bfg} \CalV_1)^T &\preceq (J_{\bfg} \CalV_1) (\CalV_1^T \CalJ \CalV_1)^{-1} (J_{\bfg} \CalV_1)^T.
    \end{aligned}
\end{equation}
However, as previously stated, the right-hand side of Eq.~\eqref{eq:mat_ineq} may be singular.
However, recognizing that $\CalV_1^T \CalI \CalV$ and $\tilde{J}_{\bfg} \CalV_1$, where $\tilde{J}_{\bfg} = \bfS^{-1/2} J_{\bfg}$, share the same nullspace, we propose a more general form of the inequality in Eq.~\eqref{eq:mat_ineq_inv} as
\begin{equation}
    \label{eq:mat_ineq_pseudoinv}
    \begin{aligned}
      (\tilde{J}_{\bfg} \CalV_1) (\CalV_1^T \CalI \CalV_1)^{-1} (\tilde{J}_{\bfg} \CalV_1)^T
      &\preceq
	(\tilde{J}_{\bfg} \CalV_1) (\CalV_1^T \CalJ \CalV_1)^{\dagger} (\tilde{J}_{\bfg} \CalV_1)^T \\
      \Rightarrow
      (J_{\bfg} \CalV_1) (\CalV_1^T \CalI \CalV_1)^{-1} (J_{\bfg} \CalV_1)^T
      &\preceq
      (J_{\bfg} \CalV_1) (\CalV_1^T \CalJ \CalV_1)^{\dagger} (J_{\bfg} \CalV_1)^T,
    \end{aligned}
\end{equation}
where $(\cdot)^{\dagger}$ denotes the Moore--Penrose pseudo-inverse and we obtain the last line by multiplying both sides by $\bfS^{1/2}$ on the left and right.

The matrix on the left-hand side of Eq.~\eqref{eq:mat_ineq_pseudoinv} is the same as the left-hand side of Eq.~\eqref{eq:preds_constraint_lin}.
This can be shown by considering a parameter transformation
\begin{equation*}
    \bfth = \CalV \bfph
    = \begin{bmatrix} \CalV_1 & \CalV_2 \end{bmatrix} \begin{bmatrix} \bfph_1 \\ \bfph_2 \end{bmatrix},
\end{equation*}
which separates the components of $\bfth$ in the column space and nullspace of $\CalI$, i.e., the identifiable and unidentifiable parameters, respectively.
The covariance matrices for $\bfth$ and $\bfph$ are related by
\begin{equation*}
    \Cov(\bfth) = \CalV \Cov(\bfph) \CalV^T.
\end{equation*}
Then, we multiply both sides by $J_{\bfg}$ on the left and its transpose on the right,
\begin{equation}
    \label{eq:j_cov_j}
    \begin{aligned}
      J_{\bfg} \Cov(\bfth) J_{\bfg}^T &= J_{\bfg} \CalV \Cov(\bfph) \CalV^T J_{\bfg}^T \\
			    &= J_{\bfg} \begin{bmatrix} \CalV_1 & \CalV_2 \end{bmatrix}
			      \begin{bmatrix}
				\Cov(\bfph_1) & \Cov(\bfph_1, \bfph_2) \\
				\Cov(\bfph_2, \bfph_1) & \Cov(\bfph_2)
			      \end{bmatrix}
			      \begin{bmatrix} \CalV_1^T \\ \CalV_2^T \end{bmatrix} J_{\bfg}^T \\
			    &= J_{\bfg} \CalV_1 \Cov(\bfph_1) \CalV_1^T J_{\bfg}^T,
    \end{aligned}
\end{equation}
where we have use the fact that $J_{\bfg}$ and $\CalJ$ share the same nullspace from Eq.~\eqref{eq:fim_target} and applied Lemma~\ref{lemma:nullspace_fims} to set $J_{\bfg} \CalV_2 = 0$.
Additionally, we can relate the FIM for $\bfph$ and $\CalI$ through a similarity transformation
\begin{equation*}
    \CalI_{\bfph} = \CalV^T \CalI \CalV
    = \begin{bmatrix} \CalV_1^T \CalI \CalV_1 & 0 \\ 0 & 0 \end{bmatrix}
    = \begin{bmatrix} \Lambda_1 & 0 \\ 0 & 0 \end{bmatrix},
\end{equation*}
and it follows that the covariance of $\bfph_1$ is given by
\begin{equation}
    \label{eq:cov_phi_1}
    \Cov(\bfph_1) = (\CalV_1^T \CalI \CalV_1)^{-1}.
\end{equation}
Finally, we substitute Eq.~\eqref{eq:cov_phi_1} into Eq.~\eqref{eq:j_cov_j} to show that the left-hand sides of Eqs.~\eqref{eq:mat_ineq_pseudoinv} and Eq.~\eqref{eq:preds_constraint_lin} are the same.

For the expression on the right-hand side of Eq.~\eqref{eq:mat_ineq_pseudoinv}, we consider the singular value decomposition
\begin{equation*}
    J_{\bfg} \CalV_1 = U S V^T
\end{equation*}
and partition each matrix as
\begin{equation*}
    U = \begin{bmatrix} U_a & U_b \end{bmatrix}, \quad
    V = \begin{bmatrix} V_a & V_b \end{bmatrix}, \quad \text{and} \quad
    S = \begin{bmatrix} S_a & 0 \\ 0 & 0 \end{bmatrix}.
\end{equation*}
We use this partition and expand the right-hand side of Eq.~\eqref{eq:mat_ineq_pseudoinv},
\begin{align*}
  (J_{\bfg} \CalV_1) (\CalV_1^T \CalJ \CalV_1)^{\dagger} (J_{\bfg} \CalV_1)^T
  &= (J_{\bfg} \CalV_1) \left( (J_{\bfg} \CalV_1)^T \bfS^{-1} (J_{\bfg} \CalV_1) \right)^{\dagger} (J_{\bfg} \CalV_1)^T \\
  &= (J_{\bfg} \CalV_1) (J_{\bfg} \CalV_1)^{\dagger} \bfS \left( (J_{\bfg} \CalV_1)^T \right)^{\dagger} (J_{\bfg} \CalV_1)^T \\
  &= U_a U_a^T \bfS U_a U_a^T.
\end{align*}
Notice that the use of Moore--Penrose pseudo-inverse is justified because the nullspace of $\CalV_1^T \CalJ \CalV_1$ coincides with the nullspace of $J_{\bfg} \CalV_1$.
Additionally, since $U_a U_a^T \preceq \mathbbm{1}$, then we have
\begin{equation}
    \label{eq:mat_ineq_rhs}
    (J_{\bfg} \CalV_1) (\CalV_1^T \CalJ \CalV_1)^{\dagger} (J_{\bfg} \CalV_1)^T \preceq \bfS.
\end{equation}

Finally, by substituting Eqs.~\eqref{eq:j_cov_j} and \eqref{eq:mat_ineq_rhs} into Eq.~\eqref{eq:mat_ineq_pseudoinv}, we obtain
\begin{align*}
  (J_{\bfg} \CalV_1) (\CalV_1^T \CalI \CalV_1)^{-1} (J_{\bfg} \CalV_1)^T
  &\preceq
    (J_{\bfg} \CalV_1) (\CalV_1^T \CalJ \CalV_1)^{\dagger} (J_{\bfg} \CalV_1)^T \\
  \Rightarrow J_{\bfg} \Cov(\bfth) J_{\bfg}^T
  &\preceq (J_{\bfg} \CalV_1) (\CalV_1^T \CalJ \CalV_1)^{\dagger} (J_{\bfg} \CalV_1)^T
    \preceq \bfS,
\end{align*}
and we recover the inequality in Eq.~\eqref{eq:preds_constraint_lin}.
Then, by substituting Eq.~(\ref{eq:preds_constraint_lin}) into Eq.~(\ref{eq:uncertainty_target}),
\begin{equation*}
    \begin{aligned}
      \Cov(\bfg) &= J_{\bfg}(\bfth_0) \Cov(\bfth) J_{\bfg}^T(\bfth_0) + \bigo{\epsilon^3} \\
		 &\preceq \bfS + \bigo{\epsilon^3} \\
      \Rightarrow \Cov(\bfg) &\preceq \bfS + \bigo{\epsilon^3}.
    \end{aligned}
\end{equation*}
With this, we complete the proof of Theorem~\ref{thm:information_matching}.

\section{Model details and Other results}
\label{sec:other-results}

In this section, we present the details of the models used in the main document and additional results for applying the information-matching method in power systems networks, underwater acoustics, and interatomic potentials in materials science.

\subsection{Power systems}
\label{sec:power-systems}

Many power systems models consist of a network of buses---representing topological nodes---connected to each other by transmission lines and transformers---representing topological edges (see Fig.~\ref{fig:ieee_14bus} for an example).
Generators inject power (similarly, current) at buses, which flows through the network to loads, drawing power from the network.
In most cases, the network carries alternating current (AC) oscillating at some nominal frequency (e.g., 60 Hz in the US or 50 Hz in European countries).
In steady state operation, system quantities can be represented as complex quantities called \emph{phasors}.
Phasor can be decomposed into either real and imaginary parts, or magnitude and phase angle, which represent the lead or lag of the quantity's oscillation relative to some reference.

One of the problems in the management and operation of power grid systems is determining where to place sensors, known as Phasor Measurement Units (PMUs), to achieve complete observability of the grid.
Complete observability means that the voltage phasors at all buses can be determined \cite{yuill_optimal_2011}.
Since PMUs are expensive, the objective is to find a minimal number of PMUs that achieve this objective.
At their associated bus, PMUs are able to measure the voltage phasor and the current phasors on each adjoining branch (transmission line or transformer).
From these measurements, system equations, known as the power flow equations (based on conservation of power at buses), can be used to determine the voltage phasors at nearby buses.

The optimal PMU placement problem can be formulated as an OED problem as follows.
Bus voltage magnitudes and angles are the model's parameters $\bfth$.
These are related to the observations $\bfp_m$  made at bus $\bfx_m$ via a observation function $\bff(\bfth; \bfx_m)$,
\begin{equation}
    \bfp_m = \bff(\bfth; \bfx_m) + \epsilon_m,
\end{equation}
where $\bfp_m$ consists of components of the voltage phasor and the current phasors on adjoining branches, and $\epsilon_m$ represents measurement noise.
The objective of the optimal placement problem is to achieve full observability of the system state variables.
In this context, the QoIs are the state variables themselves, i.e., $\bfg(\bfth; \bfy) = \bfth$, and the objective is equivalent to requiring a non-singular $\CalI$.
We achieve this requirement by setting the QoI FIM $\CalJ = \lambda I$ for some small $\lambda > 0$, e.g., $\lambda = 10^{-5}$.
The positive semidefinite constraint in Eq.~\eqref{eq:convex_opt} guarantees that the eigenvalues of $\CalI$ are greater than or equal to $\lambda$, leading to a non-singular $\CalI$.

An extension to this problem involves partitioning the network into several smaller areas and determining the optimal PMU locations for identifying the state variables within each area.
In this case, we are not concerned with observing states corresponding to buses outside the area of interest; thus, we assign infinite target uncertainty to those state variables.
This is equivalent to setting the diagonal elements of $\CalJ$ corresponding to these parameters to zero.
Additionally, the candidate PMU locations are restricted to buses within the area of interest.

In this work, we consider two power network examples: the IEEE 14-bus \cite{ieee14bus} and 39-bus \cite{ieee39bus} systems.
The results for the IEEE 39-bus system are discussed in detail in the main paper.
Additional results for the IEEE 14-bus system are presented in Fig.~\ref{fig:ieee_14bus}.
The result from the information-matching approach for full system observability agrees with previous studies \cite{yuill_optimal_2011,hajian_optimal_2007,baldwin_power_1993}, as indicated by the orange highlights on buses 2, 6, and 9.
We also investigate the observability of subsets within this network, as illustrated by areas enclosed in colored (red and green) curves in Fig.~\ref{fig:ieee_14bus} \cite{transtrum_simultaneous_2018}.
The optimal PMU placements for observing each subset are shown with highlighted buses corresponding to their respective color of the area.
Notably, the optimal buses for Area B coincide with those for observing the entire network, while only one PMU with the most connections is sufficient to fully observe Area A.

\begin{figure}[!ht]
    \centering
    \includegraphics[width=0.5\textwidth]{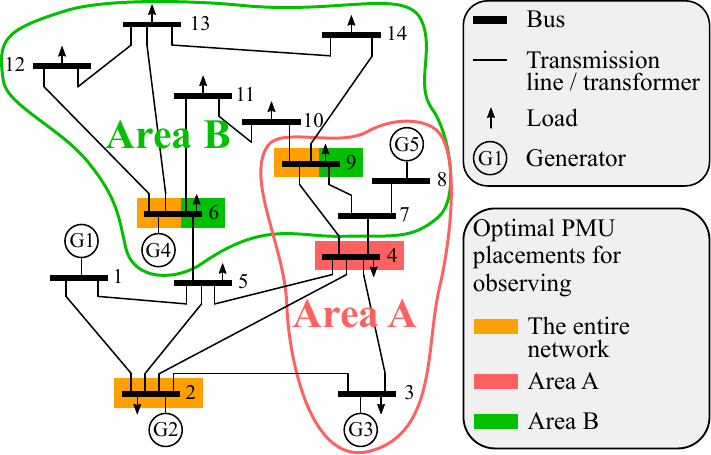}
    \caption[The IEEE 14-bus system]{
	\textbf{The IEEE 14-bus system.}
	Buses are represented by heavy black lines. Transmission lines and transformers are shown as thin lines between buses.
	Generators are depicted as circled labels G1 through G5 attached to buses, whereas loads are depicted as black arrows attached to buses.
        Orange highlighted buses indicate the optimal PMU placements for full observation of the entire network.
        Buses highlighted in red and green represent optimal PMU locations for observing areas A and B, respectively.
        Note that Buses 6 and 9 are double-highlighted with both orange and green, indicating that these buses are optimal for both full network and subset observability.
    }
    \label{fig:ieee_14bus}
\end{figure}

\subsection{Underwater acoustics}
\label{sec:underwater-acoustics}

In the field of underwater acoustics, we utilize the information-matching approach to determine the optimal locations of sensors (sound receivers, e.g., hydrophones) for identifying the locations of sound sources via passive sonar.
We simulate the sound propagation in the ocean using a range-independent normal-mode model for acousto-elastic sound propagation called ORCA \cite{westwood_normal_1996}.
For a given sound frequency $\nu$ and a set of ocean environmental parameters $\bfph$, ORCA solves the cylindrical wave equation with azimuthal symmetry, with a pressure-release boundary condition at the air-water interface at the top of the ocean.
The Green's function for this problem is given by
\begin{equation}
    \label{eq:orca_gf}
    p(r, z_s, z_r, \bfph; \nu) = \sqrt{\frac{2 \pi}{r}} e^{i \pi/4} \frac{1}{\rho_s} \sum_n{\frac{\bar{\psi_n}(z_r,\bfph;\nu) \bar{\psi_n}(z_s,\bfph;\nu) e^{i k_n(\bfph;\nu) r}}{\sqrt{k_n(\bfph;\nu)}}},
\end{equation}
where $z_r$ and $z_s$ are the receiver and source depths in meters, respectively, $r$ is the horizontal range between source and receiver in meters, $\rho_s$ is the water density at the source in kg/m$^3$, and $\nu$ is the measured sound frequency in Hz.
The depth-dependent mode functions $\bar{\psi_n}(z,\bfph;\nu)$ are vertical standing waves caused by the interference of downward and upward traveling waves at specific angles $\alpha_n$, and $k_n = \sin{\alpha_n}$ is the $n$-th modal eigenvalue.
The environmental parameters $\bfph$ may include information about water depth and sound speed, as well as sediment properties, such as density, sound speed, and attenuation coefficients of the sediment layer.
The real-valued quantity of transmission loss (TL) is then calculated from the modulus of the Green's function [Eq.~\eqref{eq:orca_gf}] as
\begin{equation}
    \label{eq:orca_tl}
    {\rm TL}(r,z_s,z_r,\bfph;\nu) = - 20 \log_{10} \left(\frac{|p(r,z_s,z_r,\bfph;\nu)|}{p_{\rm ref}}\right)
\end{equation}
with $p_{\rm ref}$ as the sound pressure in Pascals at 1~m from the sound source; i.e., TL is in units of dB re 1~m.

In this work, we aim to determine the optimal receiver locations to localize two sound sources separated vertically at depths of 8 and 16~m within a target accuracy of $\pm$ 2.5~m vertically and $\pm$ 100~m horizontally.
The candidate input data $\bfx_m$ consists of a generated rectangular grid of possible receiver locations, motivated by the common practice of using vertical and horizontal line arrays \cite{wood_optimisation_2003,dosso_optimal_1999,barlee_array_2002,dosso_array_2006}, where each input $\bfx_m$ provides the depth of receiver $m$ from the surface.
The model output $\bff(\bfth; \bfx_m)$ for each receiver povides TL measurements [Eq.~(\ref{eq:orca_tl})] for both sound sources.
This model depends on the parameters $\bfth = \{(z_s)_1, r_1, (z_s)_2, r_2, \log(\bfph)\}$, where $(z_s)_i$ and $r_i$ are the depth and range (distance between the source and receiver) of source $i$, respectively.
The FIM $\CalI_m$ for each receiver is calculated using Eq.~\eqref{eq:fim_candidate}, with additional preconditioning applied to the numerical derivative of TL with respect to the environmental parameters for improved stability, as described in \cite{mortenson_accurate_2023}.
Since our primary objective is to infer the source locations, we set the QoIs as the model parameters, i.e., $\bfg(\bfth; \bfy) = \bfth$, which also include the environmental parameters.
However, precise inference of the environmental parameters is not required; they are only estimated as needed.
Thus, the QoI FIM $\CalJ$ is set as a diagonal matrix, where the diagonal elements corresponding to the source positions [$(z_s)_i$ and $r_i$ for each source] are set to their inverse target precision, while the diagonal elements corresponding to the environmental parameters are set to zero.

Our study covers various scenarios, considering different ocean environments and sound frequencies.
In all cases, we use the same water depth and sound speed at 75~m and 1,500~m/s, respectively.
Although the seafloor composition may contain multiple layers, only a single layer (35~m thick) over a half-space is used in this work.
The half-space is characterized by a sound speed of 5,250~m/s, a density of 2.7~g/cm$^3$, and an attenuation coefficient of 0.02 dB/m-kHz.
The parameters corresponding to each seafloor material considered, including sound speed, density, and attenuation, are detailed in Table~\ref{tab:orca_sediment_parameters}.
The optimal receiver placements for each scenario are shown in Fig.~\ref{fig:orca}, with columns representing different measured sound frequencies and rows representing various sediment materials.
Notably, our findings indicate that as many as 8\% of the total receivers are sufficient to accurately localize the two sound sources within the target accuracy across different scenarios.

\begin{table}[!h]
    \centering
    \begin{tabular}{ m{15em} C{5em} C{5em} C{5em} C{5em} C{5em}}
      \hhline{======}
       & \multicolumn{5}{c}{Sediment type} \\
      Parameter name & Mud & Clay & Silt & Sand & Gravel \\
      \hline
      Top sound speed (m/s) & 1,485 & 1,500 & 1,575 & 1,650 & 1,800 \\
      Bottom sound speed (m/s) & 1,520 & 1,535 & 1,610 & 1,685 & 1,835 \\
      Bulk density (g/cm$^3$) & 1.6 & 1.5 & 1.7 & 1.9 & 2.0 \\
      Attenuation (dB/m-kHz) & 0.04 & 0.13 & 0.63 & 0.48 & 0.33 \\
      \hhline{======}
    \end{tabular}
    \caption[Seafloor sediment layer parameters]{\textbf{Seafloor sediment layer parameters.}}
    \label{tab:orca_sediment_parameters}
\end{table}

\subsection{Materials science}
\label{sec:materials-science}

In materials science, interatomic potentials are fundamental to atomistic scale simulations.
Atoms are treated as classical particles governed by Newtonian dynamics, with interatomic potentials defining the interaction energy between them.
Typically, these potentials are trained on energies and forces on atoms predicted from computationally demanding quantum-accurate theory, and then the same potentials are used in simulations on larger time- and length-scales to predict macroscopic properties of materials.
The accuracy of such predictions depends principally on the quality of the interatomic potentials used. Considering the high cost of generating training data from first-principle calculations, AL has been utilized to selectively acquire first-principles training data to improve the accuracy of the potentials while reducing computational expenses.
In this work, we use our information-matching approach to improve the efficiency of AL in developing interatomic potentials specifically tailored for accurate prediction of given target material properties.

Given an atomic configuration with $N$ atoms, the total potential energy of the configuration can be written as follows
\begin{equation}
    \label{eq:total_potential_energy}
    V = \sum_{\substack{i, j=1 \\ i < j}}^N \phi_2(\mathbf{r}_i, \mathbf{r}_j) + \sum_{\substack{i, j, k=1 \\ i < j < k}}^N \phi_3(\mathbf{r}_i, \mathbf{r}_j, \mathbf{r}_k) + \dots,
\end{equation}
where $\phi_n$ represents the $n$-body potential term and $\mathbf{r}_i$ denotes the position of atom $i$.
The force acting on atom $i$ is given by
\begin{equation}
    \label{eq:force}
    \mathbf{F}_i = -\nabla_i V,
\end{equation}
where the gradient is calculated with respect to the coordinates of atom $i$ \cite{Tadmor_Modeling_Materials,LeSar_2013}.
These equations for energy and forces are used in training the potentials and in subsequent atomistic simulations to compute the target material properties.

Here we focus on the development of a Stillinger--Weber (SW) potential for molybdenum disulfide (MoS$_2$) system (see Fig.~\ref{fig:mos2} for MoS$_2$ crystal structure).
Proposed specifically for covalent materials, the above many-body expansion is truncated to include only two-body and three-body terms expressed as:
\begin{equation}
    \label{eq:sw_potential}
    \begin{aligned}
      \phi_2^{IJ}(r_{ij}) &= A_{IJ} \left( B_{IJ} \left( \frac{\sigma_{IJ}}{r_{ij}} \right)^{p_{IJ}} - \left( \frac{\sigma_{IJ}}{r_{ij}} \right)^{q_{IJ}} \right) \exp \left( \frac{\sigma_{IJ}}{r_{ij} - r_{IJ}^{\text{cut}}} \right), \\
      \phi^{IJK}_3(r_{ij}, r_{ik}, \beta_{jik}) &= \lambda_{IJK} \left( \cos \beta_{jik} - \cos \beta_{IJK}^0 \right)^2 \exp \left( \frac{\gamma_{IJ}}{r_{ij} - r_{IJ}^{\text{cut}}} +\frac{\gamma_{IK}}{r_{ik} - r_{IK}^{\text{cut}}} \right),
    \end{aligned}
\end{equation}
where $r_{ij}$ is the distance between atoms $i$ and $j$, $\beta_{jik}$ is the angle between bonds $i-j$ and $i-k$, and the uppercase subscripts denote the atoms types \cite{SW_paper_1,SW_paper_2,wen_force-matching_2017_edited}.
Previously, this SW potential was trained to fit the energy and/or atomic forces of various atomic configurations of MoS$_2$ with their reference values (i.e., labels) obtained from first-principle calculations.

In the main document, we optimize 15 parameters of the SW potential: the two-body parameters $A_{IJ}$, $B_{IJ}$, $p_{IJ}$, and $\sigma_{IJ}$ for $Mo-Mo$, $Mo-S$, and $S-S$ interactions, the three-body parameters $\lambda_{IJK}$ for $Mo-S-Mo$ and $S-Mo-S$ interactions, and $\gamma$.
The remaining parameters are fixed at their nominal values, as specified in OpenKIM \cite{Tadmor_Elliott_Sethna_Miller_Becker_2011,elliott:tadmor:2011,OpenKIM_SW_MoS2_driver,OpenKIM_SW_MoS2}.
Furthermore, to address the differences in physical interpretations and units of the potential parameters, we apply a parameter transformation and set $\bfth$ to be the logarithms of the original parameters, thereby standardizing their scales.

\begin{figure}[!ht]
    \centering
    \begin{subfigure}[t]{0.4\linewidth}
	\centering
	\includegraphics[scale=0.2]{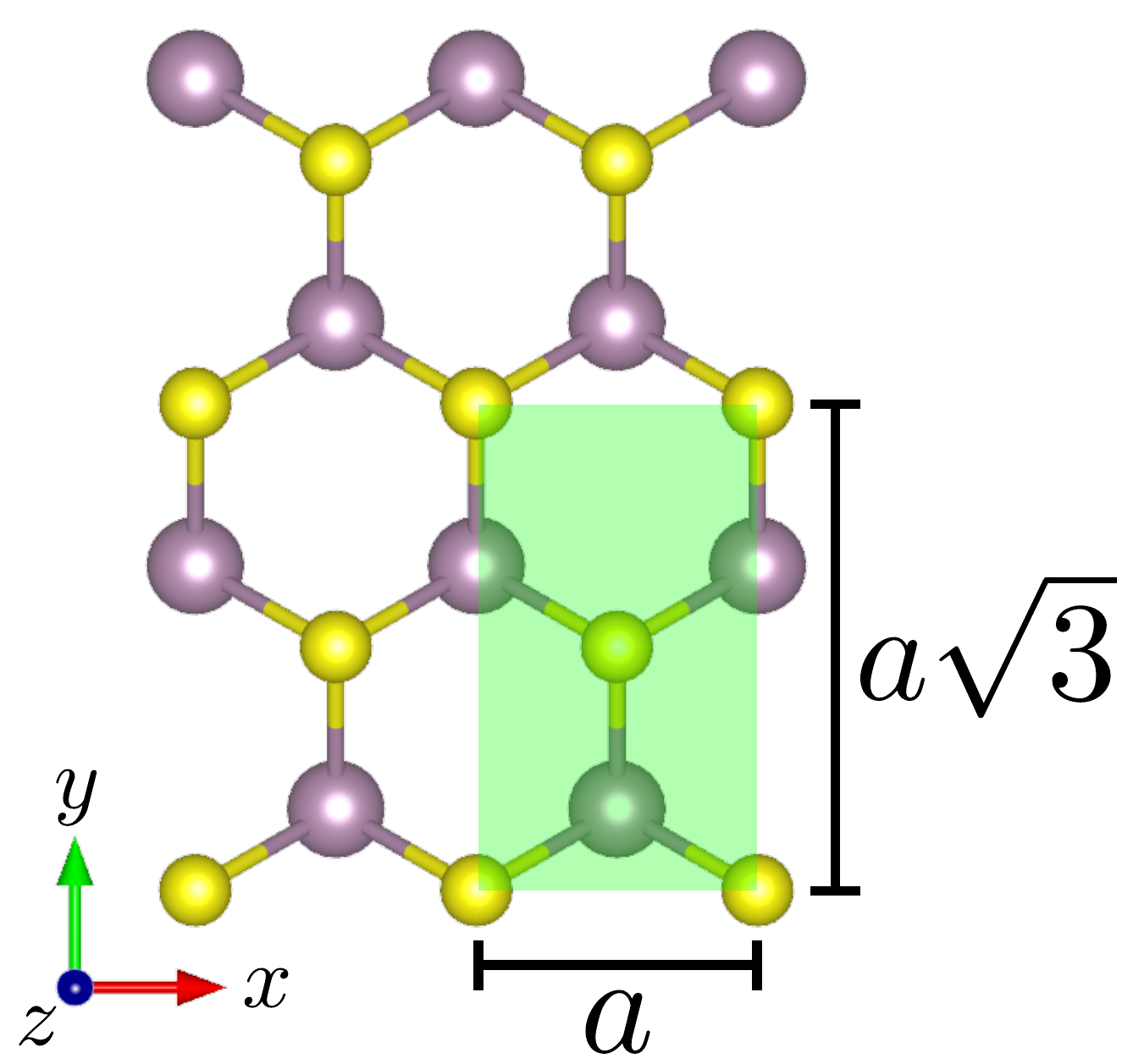}
	\caption{}
    \end{subfigure}
    \begin{subfigure}[t]{0.4\linewidth}
	\centering
	\includegraphics[scale=0.2]{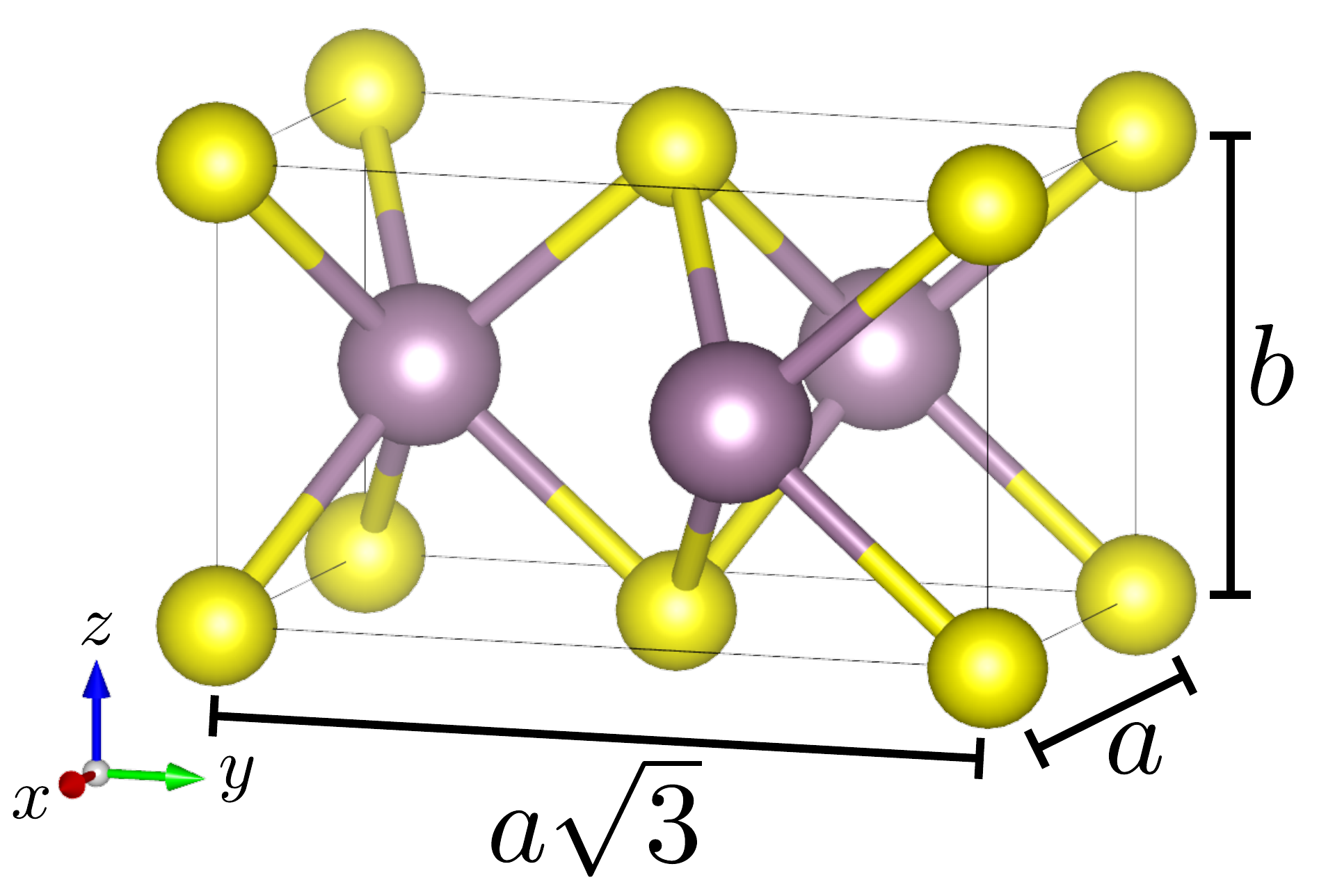}
	\caption{}
    \end{subfigure}
    \caption[Crystal structure of monolayer MoS$_2$]{
	\textbf{Crystal structure of monolayer MoS$_2$.}
	(a) Top view, with the conventional tetragonal unit cell depicted by the shaded green region.
	(b) Oblique view of the unit cell shown in (a).
	The Mo and S atoms are represented by yellow and purple spheres, respectively.
	The in-plane equilibrium lattice constant is denoted by $a$, and $b$ represents the layer thickness.
    }
    \label{fig:mos2}
\end{figure}

The candidate training dataset for the development of this potential consists of 2,000 atomic configurations obtained from snapshots of an \emph{ab initio} molecular dynamics trajectory at 750 K.
Following Wen. et al., we train this potential to fit only the forces of atoms (no energies) in these configurations, using ground truth values from density functional theory (DFT) calculations \cite{wen_force-matching_2017_edited,wen_dataset_2024}.
We then define the training model $\bff(\bfth; \bfx_m)$ to compute the force vector for each atom in a given atomic configuration $\bfx_m$ using Eq.~\eqref{eq:force}.
Finally, the Jacobian $J_{\bff}(\bfth; \bfx_m)$ to compute the FIM $\CalI_m$ is calculated by taking the derivative of this training model with respect to $\bfth$.

Our objective for this case is to precisely predict the energy as a function of lattice parameter at 0~K for a monolayer MoS$_2$.
This QoI, $\bfg(\bfth; \bfy)$,  provides a critical insight into the material's behavior and stability under varying strain conditions.
To compute the QoI, we construct an MoS$_2$ lattice in LAMMPS, an atomistic simulation library \cite{LAMMPS}, with periodic boundary conditions in the $x$ and $y$ directions (see Fig.~\ref{fig:mos2}).
Then, the sheet is compressed or stretched by varying the lattice parameter $a$ while preserving the MoS$_2$ structure.
For each value of $a$, atom positions are allowed to relax in the $z$ direction to minimize the energy, and the minimum energy is recorded.
To compute the energy increase caused by straining the lattice, we subtract from each energy value the energy computed at the equilibrium lattice constant obtained by relaxing the lattice in all directions.
We aim to precisely predict this excess strain energy within 10\% of the values predicted by the potential trained on the full dataset, and assume no correlation between the prediction points, i.e., $\bfS$ is a diagonal matrix \cite{wen_force-matching_2017_edited,OpenKIM_SW_MoS2_driver,OpenKIM_SW_MoS2}.

We apply the active learning algorithm based on information-matching, as described in the main document, to train the potential and identify the optimal training configurations.
However, we observe a discrepancy in physical units between the training data and the QoI, which can lead to convergence issues for the numerical solver applied to Eq.~\eqref{eq:convex_opt}.
To address this challenge, we modify Eq.~\eqref{eq:convex_opt} and solve the following convex problem for $\tilde{\bfw} = \begin{bmatrix} \tilde{w}_1 & \tilde{w}_2 & \dots & \tilde{w}_M \end{bmatrix}^T$,
\begin{equation}
    \label{eq:convex_opt_edited}
    \begin{aligned}
	& \text{minimize} && \Vert \tilde{\bfw} \Vert_1 \\
	& \text{subject to} && \tilde{w}_m \geq 0, \\
	& && \tilde{\CalI} = \sum_m \tilde{w}_m \tilde{\CalI}_m \succeq \tilde{\CalJ},
    \end{aligned}
\end{equation}
where $\tilde{w}_m = w_m a_m/b$, $\tilde{\CalI}_m = \CalI_m/a_m$, $\tilde{\CalJ} = \CalJ/b$, $a_m = \Vert \CalI_m \Vert_F$, $b = \Vert \CalJ \Vert_F$, and $\Vert \cdot \Vert_F$ denotes the Frobenius norm.
The Frobenius norm for a matrix $A$ is calculated by
\begin{equation*}
    \Vert A \Vert_F = \left(\sum_i \sum_j A_{ij}^2 \right)^{1/2},
\end{equation*}
where $A_{ij}$ is the element of matrix $A$ in the $i$-th row and $j$-th column.
Notably, solving Eq.~\eqref{eq:convex_opt_edited} is equivalent to solving Eq.~\eqref{eq:convex_opt}.
Additionally, the loss function in Eq. \eqref{eq:least-squares_loss} uses the weights $w_m$ rather than the transformed weights $\tilde{w}_m$.
The optimal results for this case are presented and discussed in the main document.

The outcome of the active learning loop in Algorithm~1 can depend on the initial choice of model parameters used to evaluate the FIM.
Different initial parameters can lead to different selected configurations, corresponding weights, and ultimately different optimal parameter estimates.
As a result, the final model predictions and the associated uncertainties may vary across runs.
However, regardless of these variations, the method is designed to ensure that the predicted QoI uncertainties satisfy the prescribed target precision, provided that the problem is feasible.
Thus, this method is robust to the choice of initial conditions.

To illustrate this behavior, we consider a numerical example involving several SW potentials for MoS$_2$, each constructed with the same settings as before but initialized from different randomly perturbed parameter values.
Each such initialization defines a separate run of the active learning loop.
In this example, instead of using DFT forces as ground truth, we use forces generated by a reference SW potential trained on the full dataset \cite{wen_force-matching_2017_edited}, and add independent Gaussian noise with a standard deviation of 10~meV/\AA~ to each force component.
This substitution allows us to control the variability in the force data and focus specifically on the effects of the initial parameter choice.
The target uncertainty for the QoI is again set to 10\% of the corresponding QoI values predicted by this reference potential, with a minimum threshold of 10~meV.

Across these runs, only 1--2\% of the candidate configurations are sufficient to constrain the relevant parameters and achieve the target precision.
Each run selects a different set of optimal configurations and corresponding weights; thus, they constrain the model parameters differently.
As a consequence, the QoI predictions, errors, and uncertainties also vary with the initial parameters, as shown in Fig.~\ref{fig:swmos2_initial_parameters}a.
Despite these variations, the propagated uncertainty from the selected training data is consistently lower than the target uncertainty.
This behavior is a consequence of the matrix inequality constraint imposed in Eq.~(\ref{eq:convex_opt}).
When this constraint is satisfied, the eigenvalues of the FIM for the optimal training data are larger than those of the QoI FIM, as illustrated in Fig.~\ref{fig:swmos2_initial_parameters}b.
This indicates that the model parameters are adequately constrained by the optimal training data than what is required.
Thus, while there is some dependence on the initial parameters, the final result is robust, and the main objective of uncertainty constraint is consistently satisfied.

\begin{figure}[!ht]
    \centering
    \includegraphics[width=0.9\textwidth]{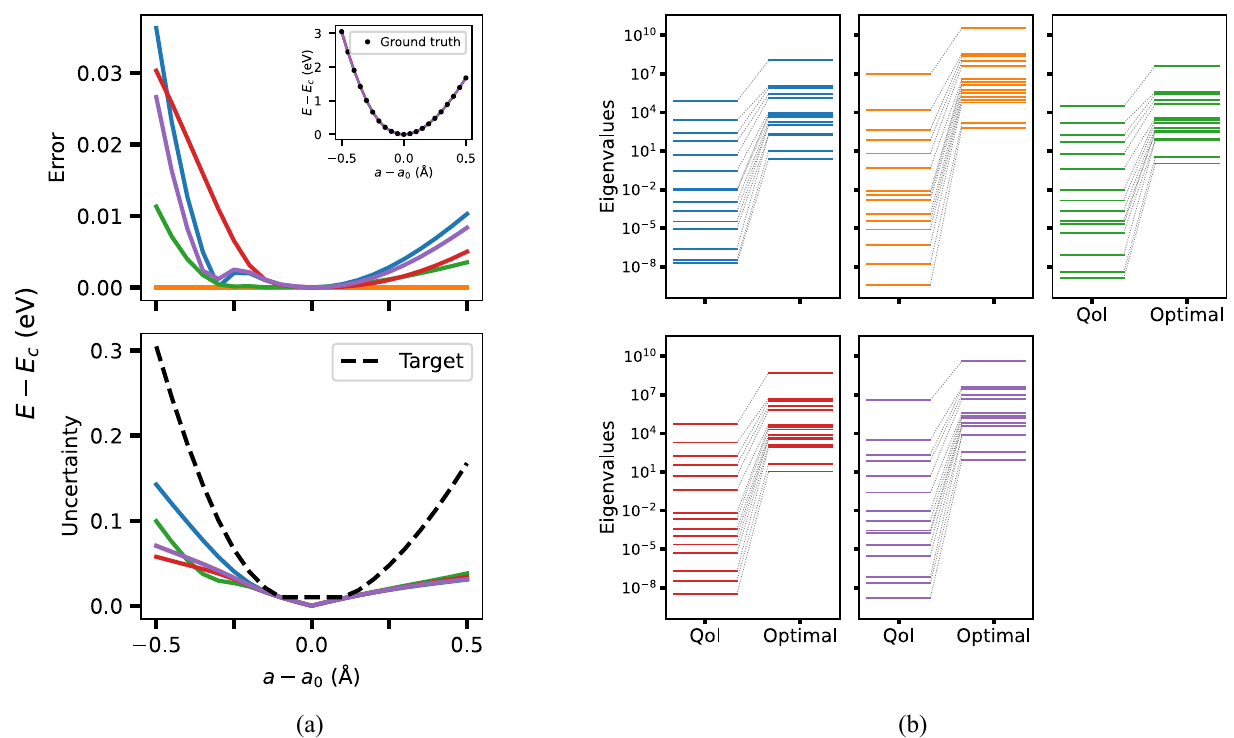}
    \caption[Effect of initial parameters on the active learning outcome for the SW MoS$_2$ potential]{
	\textbf{Effect of initial parameters on the active learning outcome for the SW MoS$_2$ potential.}

	(a) QoI prediction errors (top) and prediction uncertainties (bottom) for models initialized with different randomly perturbed parameters (colors).
	The inset in the top panel shows the corresponding energy predictions, which closely overlap across runs.
	The black dashed curve in the bottom panel represents the target uncertainty for the QoI.
	While specific predictions and uncertainties vary across initializations, the uncertainty constraint is consistently satisfied---i.e., all prediction uncertainties remain below the target.
	This consistency is a consequence of the matrix inequality constraint in Eq.~(\ref{eq:convex_opt}).

	(b) Comparison between the eigenvalues of the QoI FIM and those of the FIM computed from the optimal training data for different runs.
	When the matrix inequality constraint is satisfied, the eigenvalues of the optimal training-data FIM exceed those of the QoI FIM component-wise, indicating that the selected training data provide sufficient information to achieve the prescribed QoI precision.
    }
    \label{fig:swmos2_initial_parameters}
\end{figure}

We additionally demonstrate the performance of our information-matching algorithm in fitting a single-element SW potential for silicon (Si).
We use only five parameters of the SW potential: $A$, $B$, $\sigma$, $\lambda$, and $\gamma$, while keeping all the remaining parameters fixed at their nominal values provided in OpenKIM \cite{SW_paper_1,SW_paper_2,SW_OpenKIM_MD,SW_OpenKIM_MO}.
Similar to the MoS$_2$ case, we use logarithms of these potential parameters as the set $\bfth$ to account for their differing physical units.
Our candidate training data consists of 400 atomic configurations; 100 configurations correspond to perfect diamond cubic unit cells with varying lattice parameters, in which forces on atoms are zero and only the energy per atom [Eq.~\eqref{eq:total_potential_energy}] is used for potential fitting (an illustration of a diamond unit cell is given in Fig.~\ref{fig:si}), and 300 additional configurations were created by perturbing atoms from their perfect positions in the unit cells \cite{Wen_Afshar_Elliott_Tadmor_2022}.  
For these 300 configurations, only forces on atoms [Eq.~\eqref{eq:force}] are utilized in fitting.
For this demonstration, all labels are generated using the EDIP potential \cite{EDIP_paper,OpenKIM_EDIP_MD,OpenKIM_EDIP_MO} rather than from first-principle calculations.

\begin{figure}[!ht]
    \centering
    \includegraphics[scale=0.2]{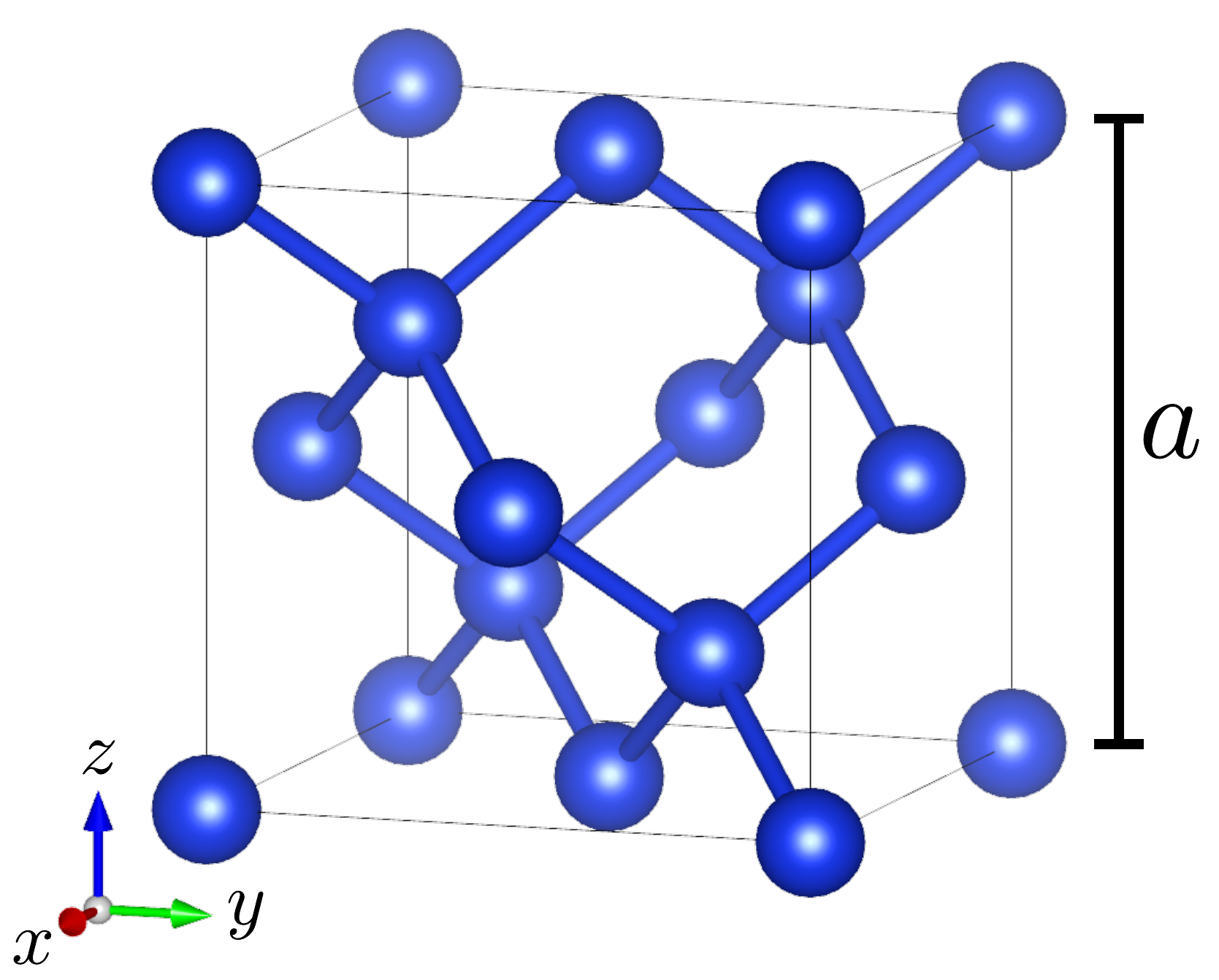}
    \caption[Diamond cubic crystal structure of Si]{
	\textbf{Diamond cubic crystal structure of Si.}
	Silicon atoms are represented as blue spheres and the lattice parameter $a$ indicates the dimension of the cubic lattice.
    }
    \label{fig:si}
\end{figure}

We develop three optimal SW potentials, each designed to precisely predict different QoIs for the diamond cubic Si through separate active learning (AL) calculations.
These QoIs include: (1) the equilibrium lattice constant and the elastic constants, (2) energy as a function of lattice parameter, and (3) phonon dispersion curves.
The equilibrium lattice constant ($a_0$) is determined by minimizing the energy with respect to the lattice parameter $a$, and the corresponding minimum energy per atom yields the cohesive energy ($E_c$).
The elastic constants are calculated from the Hessian matrix (second derivative matrix) of the energy density with respect to lattice strain.
For a cubic crystal like diamond, there are three independent elastic constants $c_{11}$, $c_{12}$, and $c_{44}$, which represent the material's stiffness under uniaxial strain, the coupling between perpendicular stresses, and the resistance to shear deformation, respectively \cite{Tadmor_Modeling_Materials,OpenKIM_elastic_constants_TD,OpenKIM_elastic_constants_TE}.
The energy as a function of lattice parameter is obtained by uniformly varying $a$ in all directions and calculating the corresponding energy per atom.
Finally, the phonon dispersion is calculated using the ASE Python package \cite{ase-paper}, which involves computing the force constants that describe the atomic force response to small displacements in the crystal.

For each QoI, the predictions are assumed to be independent, and the target precision is set to be 10\% of the values predicted by the potential developed by Stillinger and Weber \cite{SW_paper_1,SW_paper_2,SW_OpenKIM_MD,SW_OpenKIM_MO}.
This implies that, in each case, $\bfS$ is a diagonal matrix with the diagonal elements given by the target variance of the corresponding predictions.
Additionally, as in the MoS$_2$ case, we replace Eq.~\eqref{eq:convex_opt} with Eq.~\eqref{eq:convex_opt_edited} within the active learning algorithm to address the unit discrepancies between training data and QoIs.
Comparisons between the target precisions and the uncertainties obtained from the optimal configurations are presented in Table~\ref{tab:swsi_lattice_elastic_constants} for the lattice and elastic constants, and Fig.~\ref{fig:swsi_energy_latconst_phonon_dispersion} for the energy vs.~lattice parameters and phonon dispersion curve.
In all three cases, the optimal training sets consist of at most five atomic configurations with varying lattice parameters, among which one corresponds to a perfect lattice configuration with only energy data, while the others include forces on atoms data.
Together, these configurations provide sufficient information to constrain the model parameters and achieve the predefined target precision.
Furthermore, our information-matching procedure can identify which specific quantities---energy or force components---should be computed in each down selected candidate configuration.

\begin{table}[!h]
    \centering
    \begin{tabular}{m{10em} C{5em} C{5em} C{5em} C{5em} C{5em}}
      \hhline{======}
      & $a_0~(\AA)$ & $E_c$ (eV) & $c_{11}$ (GPa) & $c_{12}$ (GPa) & $c_{44}$ (GPa) \\ 
      \hline
      Optimal predictions & 5.4307 & 4.3363 & 151.4339 & 76.4375 & 56.4477 \\
      Optimal uncertainty & 0.0477 & 0.2321 & 7.9730 & 6.9252  & 5.0839 \\
      Target uncertainty & 0.54310 & 0.43364 & 15.14158 & 7.64180 & 5.64458 \\
      \hhline{======}
    \end{tabular}
    \caption[Lattice and elastic property predictions for silicon in diamond structure]{
	\textbf{Lattice and elastic property predictions for silicon in diamond structure}
	The columns show the target properties: the equilibrium lattice constant ($a_0$), cohesive energy ($E_c$), and elastic constants $c_{11}$, $c_{12}$, and $c_{44}$.
	The first and second rows present the predictions and uncertainties of these quantities calculated using the optimal SW potential trained on configurations identified by the information-matching approach.
	The optimal training configurations comprise one perfect lattice configuration and three perturbed lattice configurations, each with different lattice parameters.
	These training sets effectively constrain potential parameters to achieve the predefined target precision, as indicated by lower optimal uncertainty values compared with the target error bars presented on the third row.
    }
    \label{tab:swsi_lattice_elastic_constants}
\end{table}

\begin{figure}[!h]
    \centering
    \begin{subfigure}[t]{0.45\textwidth}
	\centering
	\includegraphics[width=\textwidth]{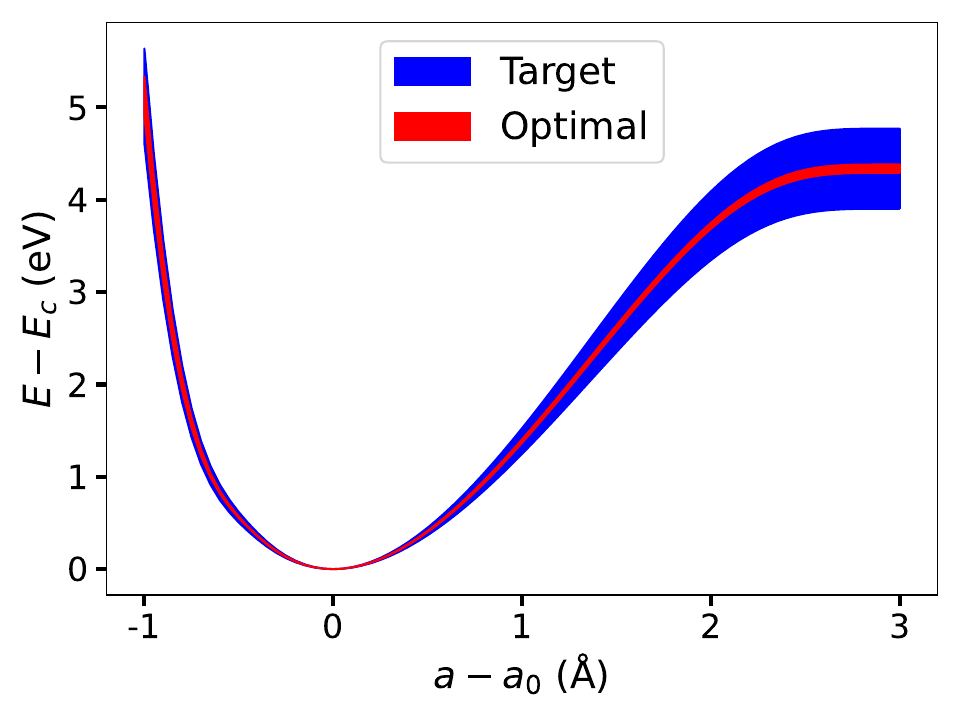}
	\caption{}
	\label{fig:swsi_energy_latconst}
    \end{subfigure}
    \qquad
    \begin{subfigure}[t]{0.45\textwidth}
	\centering
	\includegraphics[width=\textwidth]{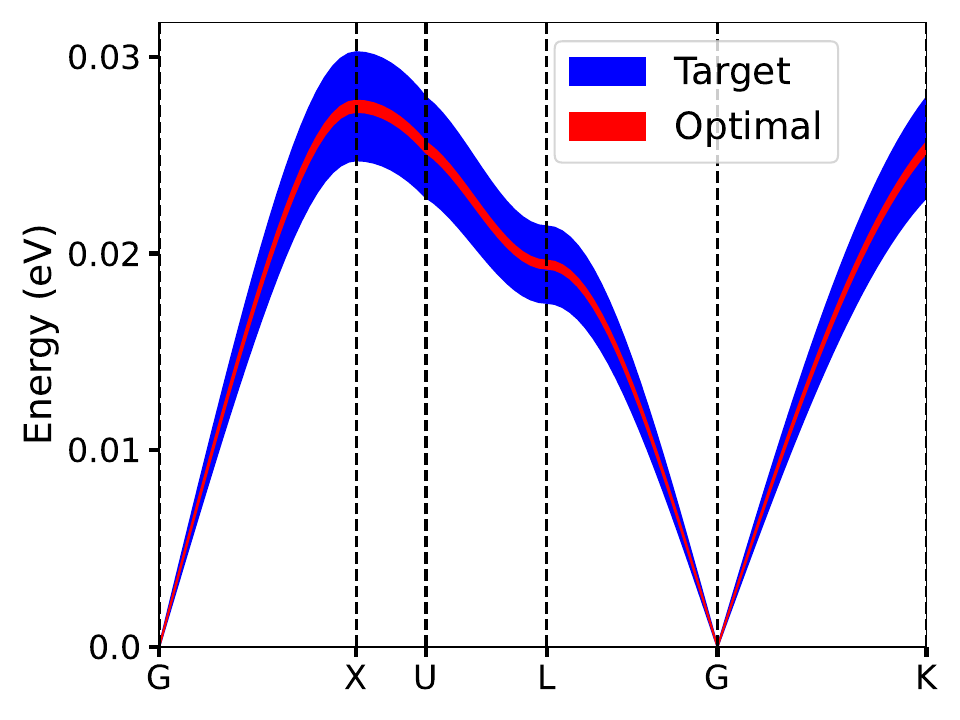}
	\caption{}
	\label{fig:swsi_phonon_dispersion}
    \end{subfigure}
    \caption[Uncertainties of the energy $E$ as a function of lattice parameter $a$ and phonon dispersion curve for silicon in diamond structure]{
	\textbf{Uncertainties of (a) the energy $E$ as a function of lattice parameter $a$ and (b) phonon dispersion curve for silicon in diamond structure.}
	The blue envelopes represent the target precision for each QoI, while the red envelop indicate the uncertainties induced by the optimal training configurations.
	For each case, the optimal configurations consist of one perfect lattice configuration and four perturbed lattice configurations, each with different lattice parameters.
	Note that in both scenarios, these five configurations sufficiently constrain the potential parameters, resulting in propagated uncertainties that are smaller than the target uncertainty.
    }
    \label{fig:swsi_energy_latconst_phonon_dispersion}
\end{figure}

\begin{figure}[!ht]
    \centering
    \includegraphics[width=0.6\textheight]{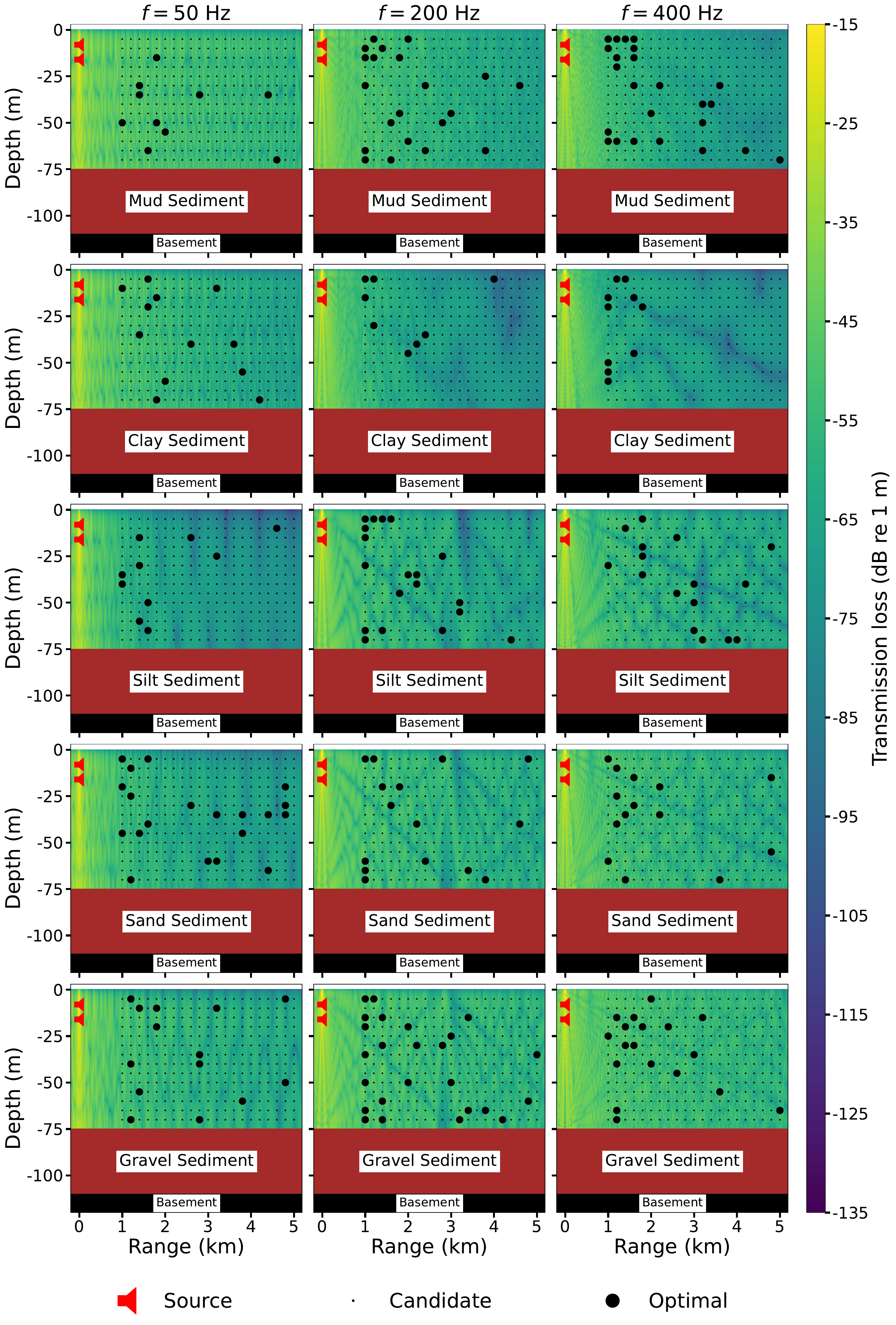}
    \caption[Source localization in a shallow ocean]{
	\textbf{Source localization in a shallow ocean.}
	Optimal receiver placements for localizing two sound sources at the top left (red speakers), considering various sediment materials and sound frequencies.
	The columns denote different source frequencies, while the rows represent sediment types.
	The objective for all cases considered is to locate each sound source within a target accuracy of $\pm$ 2.5~m vertically and $\pm$ 100~m horizontally.
	Larger dots represent the optimal receiver locations determined using the information-matching method, among the candidates denoted by smaller dots.
	Additionally, we have included the transmission loss pattern from the top source for each case to show some correlation between the optimal receiver locations and the transmission loss pattern.
	Across different scenarios, the information-matching approach indicates that we only need at most 8\% of the total receivers to locate the sources within the target accuracy.
    }
    \label{fig:orca}
\end{figure}

\bibliographystyle{unsrt}
\bibliography{refs.bib,refs_zotero.bib}